\documentclass{article}

\usepackage[preprint]{neurips_2026}
\usepackage{amsmath}
\usepackage{amssymb}
\usepackage{mathtools}
\usepackage{amsthm}
\usepackage{graphicx}
\usepackage{subcaption}
\usepackage{tikz}
\theoremstyle{plain}
\newtheorem{theorem}{Theorem}[section]

\newtheorem{lemma}[theorem]{Lemma}
\newtheorem{example}[theorem]{Example}
\newtheorem{corollary}[theorem]{Corollary}
\theoremstyle{definition}
\newtheorem{definition}[theorem]{Definition}

\theoremstyle{remark}
\newtheorem{remark}[theorem]{Remark}

\usepackage[utf8]{inputenc} 
\usepackage[T1]{fontenc}    
\usepackage{hyperref}       
\usepackage{url}            
\usepackage{booktabs}       
\usepackage{amsfonts}       
\usepackage{nicefrac}       
\usepackage{microtype}      
\usepackage{xcolor}         
\usepackage{algorithm}
\usepackage{algpseudocode}
\usepackage{amsmath}
\usepackage{natbib}
\usepackage{comment}
\usepackage{wrapfig}

\title{HORST: Composing Optimizer Geometries \\ for Sparse Transformer Training}

\author{%
  Tom Jacobs \ \ \ \ Rohan Jain \ \ \ \ Rebekka Burkholz \\
  CISPA Helmholtz Center for Information Security\\
  \texttt{tom.jacobs@cispa.de} \\
}

\begin{document}

\maketitle

\begin{abstract}

   Sparsifying transformers remains a fundamental challenge, as standard optimizers fail to simultaneously encourage sparsity and maintain training stability. Effective adaptive optimizers exhibit an implicit $L_{\infty}$ bias favoring stability, yet, sparsity requires an $L_1$ bias. To integrate sparsity, we propose a composition of optimizer steps, which we cast as non-commutative operators to analyze and combine their optimization geometry in a principled way.
   This yields HORST (Hyperbolic Operator for Robust Sparse Training), a modular optimizer that inherits stability from adaptive methods while inducing $L_1$ sparsity bias through a hyperbolic mirror map. Our experiments demonstrate its utility for sparse training of transformers on both vision and language tasks. HORST consistently and significantly outperforms AdamW baselines across all sparsity levels, with large gains at higher sparsity.
\end{abstract}

\section{Introduction}

Transformers \citep{vaswani2017attention} have become a dominant architecture across deep learning, achieving state-of-the-art performance in areas such as natural language processing~\citep{brown2020gpt3} and computer vision~\citep{dosovitskiy2021vit}. Yet, they incur a steep computational cost: modern transformer models require billions of parameters and enormous memory footprints~\citep{Kaplan2020ScalingLF, Hoffmann2022TrainingCL}, making deployment at scale a practical challenge. Network sparsification, i.e. the removal of redundant weights, offers a promising route to model compression~\citep{lecun_1989_optimal, han2015learning}, reducing storage requirements and potentially enabling faster inference on hardware that supports sparse operations~\citep{gale2020sparsegpu, mishra2021accelerating}. While pruning methods have achieved considerable success in convolutional architectures \citep{peste2021acdc, jacobs2025hamhyperbolicstepregulate, evci-rigl}, they work less well for transformers: sparsification consistently leads to significant performance degradation especially at higher sparsity levels where compression would be most practically valuable. This contrast is not incidental. Convolutional networks are trained with standard SGD, while transformers rely on adaptive optimizers such as AdamW \citep{Loshchilov2017DecoupledWD} to train more stably. We argue that it is precisely this difference in optimization that underlies the difference in sparsifiability. The implicit bias induced by the optimizer during training shapes the geometry of the learned weights, and adaptive optimizers impose a bias that is fundamentally at odds with the structure required for sparse solutions \citep{warmuth2025how, jacobs2026never}.
Figure \ref{fig:placeholder} illustrates this through the standardized weight distributions of a pretrained ResNet-50~\citep{he2016resnet} (Convolutional) and a DeiT-base~\citep{touvron2021deit} (Transformer).

\begin{figure}
    \centering
    \includegraphics[width=0.4\linewidth]{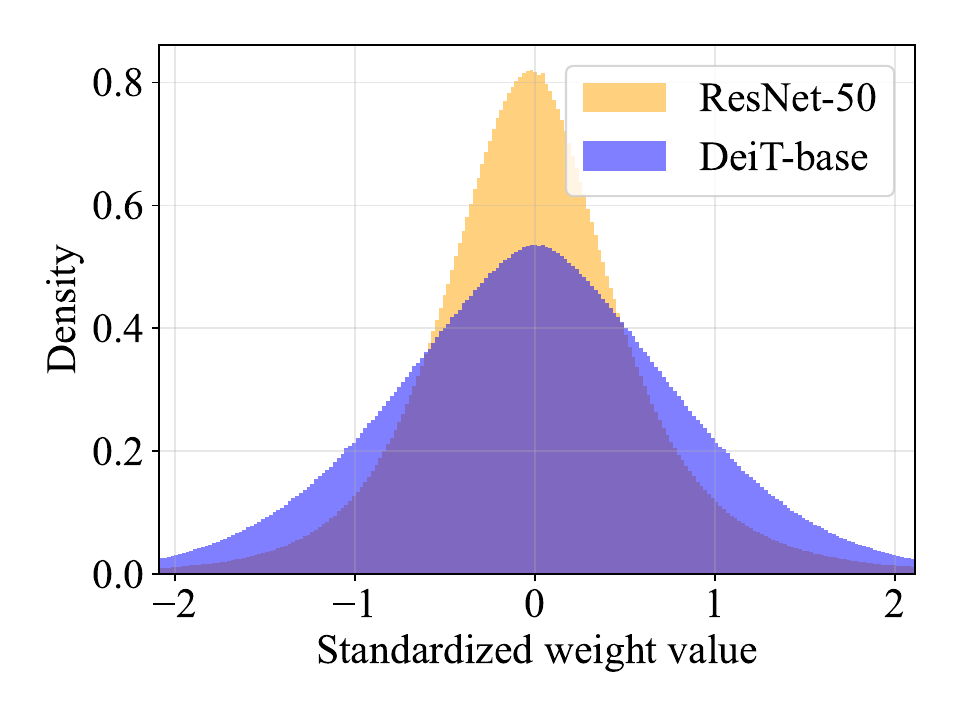}
    \qquad
    \includegraphics[width=0.4\textwidth]{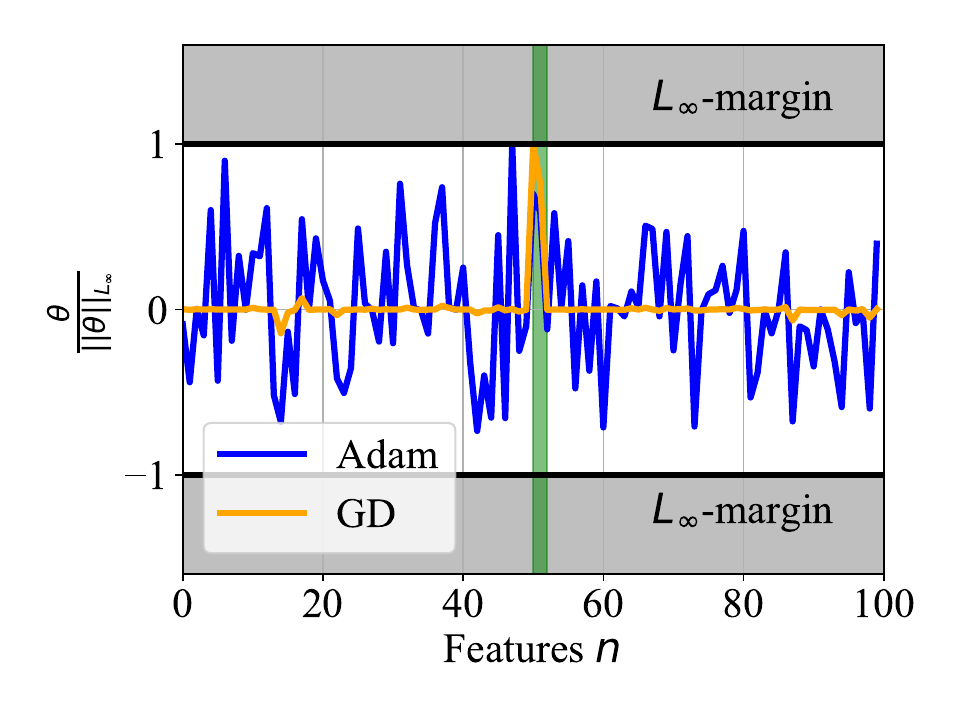}
    \caption{(Left) Standardized weight distributions for a pretrained ResNet-50 (with SGD) and a DeiT (with AdamW). The transformer weights are more spread out, which illustrates the different implicit bias ($L_2$ and $L_{\infty}$). (Right) Linear separable binary classification with a deep diagonal linear network $\langle u \odot v, x\rangle$ where $\theta = u \odot v$. GD learns the sparse features (green) while Adam overfits on the data.}
    \label{fig:placeholder}
\end{figure}

Beyond minimizing the training loss, optimizers impose a preference over which solution is selected when many equivalent ones exist. In the overparameterized regime, where infinitely many parameter configurations achieve zero training loss, this preference (i.e. the implicit bias of the optimizer) is influenced by the geometry of the update rule \citep{gunasekar2020characterizing}. Gradient descent, for instance, finds minimum $L_2$-norm solutions in simple settings such as linear regression or the $L_2$-max margin in binary classification with homogeneous neural networks \citep{Lyu2019GradientDM}. Adaptive optimizers such as Adam~\citep{kingma2017adammethodstochasticoptimization} are closely related to sign descent \citep{tsilivis2025flavors, jacobs2026never, bernstein2018signsgd}, a form of steepest descent with respect to the $L_{\infty}$-norm, 
and accordingly exhibit an $L_{\infty}$ implicit bias: weight magnitudes tend to equalize across parameters rather than concentrate. This is exactly the wrong bias for sparsity, which requires weight mass to concentrate on a small subset of parameters while the rest are driven to zero. However in the steepest descent family with respect to a general norm $|| \cdot ||$, inducing an 
$L_1$ bias would require taking steps with respect to the $L_1$-norm, which corresponds to coordinate descent, updating only the single largest gradient coordinate at each step, a procedure that is not practical for large deep networks as schematically illustrated in Figure \ref{fig: dichotomy}. 

A different optimizer geometry class, mirror descent, offers a different route: by replacing the Euclidean geometry of gradient descent with one induced by a mirror map $\nabla R : \mathbb{R}^n \rightarrow \mathbb{R}^n$, the parameters can also be made to favor sparse solutions \citep{sun2022mirror, jacobs2024maskmirrorimplicitsparsification}. 
In particular, the hyperbolic entropy mirror map is known to induce an $L_1$ bias, promoting sparsity \citep{woodworth2020kernel, pesme2021implicit}. This idea was recently exploited in the HAM optimizer \citep{jacobs2025hamhyperbolicstepregulate}, which alternates a standard optimizer step with a hyperbolic mirror step, and was shown to improve sparse training of convolutional networks. However, transformers present a fundamentally different challenge: they leverage a different geometric bias ($L_{\infty}$) for stable training. This is incompatible with standard mirror descent which directly uses the gradient. Therefore, an explanation for the tension between modern optimizers and sparsity is a geometric one. In this work, we propose to combine them in a principled way, leveraging stability of adaptive methods and the $L_1$ sparsity bias of mirror descent.

We formalize this tension by treating optimization steps as functional operators \citep{bernstein2025modular} and identifying a structural dichotomy: steepest descent is especially well suited to the $L_{\infty}$ bias that stabilizes adaptive training, while mirror descent is especially well suited to the $L_1$ bias needed for sparsity. Relying on either class alone 
therefore comes at a cost. We show that composing both operations in the right order resolves this conflict, yielding \textsc{HORST} (Hyperbolic Operator for Robust Sparse Training), a modular optimizer that integrates seamlessly with standard adaptive methods. We validate \textsc{HORST} on vision transformers and show consistent and significant improvements over AdamW baselines across sparsity levels, with the largest gains in the high-sparsity regime where adaptive optimizers struggle most.

\textbf{Contributions:}
\begin{itemize}
    \item We reveal a dichotomy between steepest descent and mirror descent geometries as shown in Figure \ref{fig: dichotomy}.
    \item We introduce optimizer-operator composition as a design principle in \S \ref{section:properties}.
    \item We show that the entropy mirror map can overwrite the steepest-descent implicit bias in Theorem \ref{theorem:implicit-bias}. This motivates the composed sparsity aware optimizer (Algorithm \ref{alg:cap}): Hyperbolic Operator for Robust Sparse Training (HORST).
    \item We experimentally evaluate on sparse training settings in vision and language tasks (\S \ref{section : experiments}).
\end{itemize}

\section{Related work}

\paragraph{Steepest descent and modern optimization.}
Recent work views optimizers as modular operations on groups of parameters \citep{bernstein2025modular}. We build on this and focus on another basic operation: composition.
This is motivated by the role of implicit bias in modern optimization, as Adaptive methods can overfit rotationally invariant sparse targets \citep{warmuth2025how}, and many large-scale optimizers can be viewed as steepest descent under suitable norms or geometries \citep{fan2025implicit,DBLP:journals/corr/abs-2405-14813, tsilivis2025flavors, jacobs2026never}.
This includes recent matrix and spectral methods such as Muon \citep{jordan2024muon}, and Scion\citep{pethick2025training}.
Adam and AdamW related to sign descent based methods \citep{Orvieto2025InSO} also induce distinctive implicit biases: both exhibit $L_\infty$ type of bias \citep{zhang2024the, xie2024implicit}. This motivates composing a stable adaptive update with a sparsity-promoting mirror update.

Moreover, related multiplicative updates appear in log-normal training dynamics \citep{nishida2025lognormalmultiplicativedynamicsstable} and in MADAM, whose exponential update is tied to the entropy mirror map $\nabla R(\theta)=\log(\theta)$ and preserves the initial parameter signs \citep{Bernstein2020LearningCF}. The closest related optimizer is HAM \citep{jacobs2025hamhyperbolicstepregulate}, which alternates a base optimizer step with the entropy mirror step to obtain the sparsity bias of this reparameterization improving state-of-the-art sparse training methods. 
HORST follows the same geometric principle, but in the transformer setting where the base optimizer relies on AdamW: by composing an adaptive steepest-type update with a hyperbolic mirror update.

\begin{figure}
\centering
\begin{tikzpicture}[
  box/.style={draw, minimum width=2.8cm, minimum height=1.8cm, align=center, line width=0.4pt},
  header/.style={draw, minimum width=2.8cm, minimum height=0.8cm, align=center, font=\bfseries\small, line width=0.4pt},
  rowlabel/.style={draw, minimum width=2cm, minimum height=1.8cm, align=center, font=\bfseries\small, line width=0.4pt},
  arrow/.style={->, thick, >=stealth},
]

\node[header, fill=gray!20] (h1) at (3.4, 0) {$L_\infty$ bias};
\node[header, fill=gray!20] (h2) at (6.2, 0) {$L_1$ bias};

\node[rowlabel, fill=gray!20] (r1) at (1, -1.3) {Steepest \\ descent};
\node[rowlabel, fill=gray!20] (r2) at (1, -3.1) {Mirror \\ descent};

\node[box, fill=green!35] (q1) at (3.4, -1.3) {Sign descent \\ ($\simeq$ Adam) \\[2pt]
  \textit{\footnotesize bounded, stable}};
\node[box, fill=red!35] (q2) at (6.2, -1.3) {Coordinate \\ descent \\[2pt]
  \textit{\footnotesize infeasible for} \\
  \textit{\footnotesize large networks}};
\node[box, fill=red!35] (q3) at (3.4, -3.1) {$\cosh$ \\ entropy \\\textit{\footnotesize loses coercivity,} \\
  \textit{\footnotesize stalls}};
\node[box, fill=green!35] (q4) at (6.2, -3.1) {Hyperbolic \\ entropy \\[2pt]
  \textit{\footnotesize unbounded, sparsity}};

\end{tikzpicture}
 \caption{\textbf{Steepest-Mirror Descent Dichotomy:}  Each geometric optimization class is effective at inducing the corresponding dual implicit bias. Both coordinate descent and $\cosh$-entropy are infeasible due to slow convergence.}
    \label{fig: dichotomy}
\end{figure}

\paragraph{Mirror descent and reparameterizations.}
Reparameterized gradient flows can be viewed as mirror flows \citep{Li2022ImplicitBO}, building on classical implicit-regularization results for matrix factorization and overparameterized models \citep{gunasekar2017implicit, arora2019implicitregularizationdeepmatrix, azulay2021implicitbiasinitializationshape, optimalsparse, Zhao_2022, li2021implicitsparseregularizationimpact, woodworth2020kernel}. 
For sparsity, the central example is the diagonal-linear reparameterization \citep{jacobs2024maskmirrorimplicitsparsification}, where the induced hyperbolic-entropy geometry interpolates between an $L_2$ and an $L_1$ bias, with weight decay and stochasticity driving the implicit bias towards $L_1$ \citep{jacobs2025mirror, pesme2021implicit}. 
This mirror-flow perspective has been extended to large learning rates, momentum, explicit regularization, and weight normalization \citep{Even2023SGDOD, PapazovPF24, jacobs2024maskmirrorimplicitsparsification, jacobs2025mirror, Chou2023RobustIR, chou2024induce}, and to implicit-bias characterizations for mirror descent and exact dynamics of deep linear networks \citep{sun2022mirror, pesme2024implicit, Kunin2024GetRQ, domin2024from}. 
Our proof uses this line of work in the following way: in the steepest mirror flow connection of \citet{jacobs2026never}, we interpret the composed steepest mirror dynamics as steepest descent under a multiplicative reparameterization, which yields the $L_1$ sparse-bias characterization in Theorem~\ref{theorem:implicit-bias}. 
HORST then turns this continuous-time mechanism into a discrete optimizer by composing an adaptive steepest-type step with an entropy mirror map update.

\paragraph{Classic sparse training.}
Sparse training has been studied extensively in vision tasks \citep{adnan2026sparseopt, mohammed2025sparsetraining, Hua2025CannistraciHebbTO, pham2023paths, pham2026the, Lunk2026SparseTO, pmlr-v202-gadhikar23a, Kolb2026}, where dynamic sparse training methods such as SET \citep{mocanu-evo-train} and RigL \citep{evci-rigl, struct-dst} alternate magnitude pruning with gradient-informed regrowth strategies, and the lottery ticket hypothesis \citep{frankle2019lottery, convexist, depthexist} established that there are trainable sparse subnetworks found at initialization. Among dense-sparse pipelines, AC/DC \citep{peste2021acdc} alternates dense and sparse phases through iterative hard thresholding and is the method we adopt as the sparsification backbone for our empirical evaluation. \cite{gadhikar2024masks, gadhikar2025signinlotteryreparameterizingsparse} further showed that the success of dense-sparse methods over pruning at initialization (PaI) traces back to their ability to flip parameter signs that PaI cannot. This is a secondary motivation we adopt for the alternating scheme in HORST as the multiplicative update on its own would prevent sign flips.

\paragraph{Sparsity in transformers.}
Transformer sparsification has been studied through several complementary approaches. Post-training pruning methods such as SparseGPT \citep{frantar2023sparsegpt}, WANDA \citep{Sun2023ASA}, and SAFE \citep{lee2025safe} remove weights after pretraining while attempting to preserve the layerwise activation statistics or local curvature structure. 
Other work studies layerwise sparsity allocation and weight redistribution, such as OWL \citep{yin2023outlier}.
For vision transformers, SVITE \citep{Chen2021ChasingSI} explores end-to-end sparse training strategies.
These approaches primarily modify the sparsification procedure, allocation rule, or pruning criterion. 
HORST instead modifies the optimizer used within a standard sparse training pipeline. Our results suggest that the implicit bias of AdamW is itself a bottleneck for sparse transformers, and that composing AdamW with a entropy mirror map update can improve the sparse solutions found by existing dense-to-sparse training methods.

\section{Composable optimizer operators and their structural properties}\label{section : notation}
We treat optimization updates as functional operators, maps that take a parameter vector and a gradient (oracle) and return a step direction. This abstraction lets us reason about optimizers at the level of what their effect is on the gradients, which is precisely the level at which the dichotomy we identify becomes legible (Figure \ref{fig: dichotomy}).
The functional formulation is motivated by the modularity framework proposed in \citet{bernstein2025modular}. The key idea of modularity is that an optimizer can be applied to a group of parameters: rather
than a single operation acting on all parameters simultaneously, the operator acts independently on subsets of parameters, layers, heads, or individual weights.
In similar spirit we can compose update rules: two operators can be applied in sequence, each acting on the output of the previous one, and the result is itself a valid optimizer. 


\paragraph{Optimizer operators.}
Since our overarching goal is parameter sparsity we consider a modular objective on the parameter level $f : \mathbb{R}^n \rightarrow \mathbb{R}$ with gradient oracle $g : \mathbb{R}^n \rightarrow \mathbb{R}^n$ and parameters $\theta \in \mathbb{R}^n$. The gradient oracle and parameters together with some
geometric quantity, a norm or a metric tensor, form a triple that characterizes
the operator. Given two operators $A$ and $B$, their composition is defined by
\begin{equation*}
   AB(\theta, g) : =  (A \circ B)(\theta, g) := A\!\left(\theta,\, B(\theta, g)\right),
\end{equation*}
that is, $B$ produces a modified gradient direction and $A$ then acts on that direction instead of the gradient itself. Moreover, in general the update rule is given by:
$
    \theta_{k +1} = \theta_k - \eta A(\theta_k, g_k)
$
for $k \in [T]$, where $\eta > 0$ is the learning rate and $(\theta_k, g_k)$ are the parameter and gradient estimate at iteration $k$.

\paragraph{Operator properties.}
Before introducing the two operator classes, we define the structural properties we will use to characterize them. These properties are not desiderata we impose; rather, we will show in \S~\ref{section:properties} that each class naturally satisfies one and violates the other, and it is this asymmetry that makes composition the right tool.

\begin{definition}\label{definition : linear opperator}
    An operator $A$ is linear if and only if it satisfies the following properties:
    \begin{itemize}
        \item for any $g, h \in \mathbb{R}^n$ and $\theta \in \mathbb{R}^n$ we have $A(\theta, g+ h) = A(\theta, g) + A(\theta, h)$.
        \item for any $g \in  \mathbb{R}^n$ and $\lambda \in \mathbb{R}$ we have $A(\theta, \lambda g) = \lambda A(\theta, g)$.
    \end{itemize}
\end{definition}

\begin{definition}\label{definition : bounded operator}
    An operator $A$ is bounded if and only if there is a constant $C\geq 0$ independent of $(\theta, g)$ such that for all $\theta \in \mathbb{R}^n$ and $g \in \mathbb{R}^n$:
    \begin{equation*}
        ||A(\theta, g)|| \leq C (1+ ||\theta||)
    \end{equation*}
\end{definition}

In Definition~\ref{definition : bounded operator} we do not specify the norm; since all norms on $\mathbb{R}^n$ are equivalent. Intuitively, boundedness is a stability condition: it says that the optimizer’s step size stays controlled, instead of becoming arbitrarily large when the gradient is large.

\begin{definition}\label{definition : commuting}
    Two operators $A,B$ are commutative if their Lie Bracket $[A,B] := AB -BA = 0$.
\end{definition}

Commutativity will matter because, as we show in \S~\ref{section:properties}, the two operator classes do not commute. The order of composition therefore determines what is inherited from each class, and the specific ordering we propose is what allows HORST to simultaneously realize stability and a sparsity bias.

\paragraph{Steepest and mirror descent operators.}
We now define the two operator classes. Each is characterized by its own
geometric quantity in the triple $(g, \theta, \cdot)$: a norm $|| \cdot || : \mathbb{R}^n \rightarrow \mathbb{R}$ for steepest descent, and a mirror potential $\nabla R : \mathbb{R}^n \rightarrow \mathbb{R}^n$ for mirror descent.

\begin{definition}(Steepest operator)
    Given the triple $(g, \theta, || \cdot ||)$ the steepest descent operator $S_{|| \cdot ||} :  \mathbb{R}^n \times \mathbb{R}^n \rightarrow \mathbb{R}^n$ with respect to a norm $|| \cdot||$ is defined as:
    \begin{equation*}
        S_{|| \cdot||}(\theta, g) : = \text{arg max}_{||d|| \leq 1} \langle g(\theta),d\rangle \qquad \text{ such that } \qquad \theta_{k +1} = \theta_k - \eta  S_{|| \cdot||}(\theta_k, g_k) \qquad \text{ for } k \in [T].
    \end{equation*}
\end{definition}

\begin{example}(Sign descent)
    Consider the triple $(g, \theta, || \cdot ||_{\infty})$ then $ S_{|| \cdot||}(\theta, g) = \text{sign} \left(g(\theta) \right)$.
\end{example}

\begin{example}(Coordinate descent)
    Consider the triple $(g, \theta, \|\cdot\|_1)$; then
    $S_{\|\cdot\|}(\theta, g) = s_i\ e_i$ where $i = \operatorname*{arg\,max}_j |g_j(\theta)|$ and $e_i$ denotes the $i-$th basis vector and $s_i : = \text{sign}(g_i(\theta))$ the sign of the chosen gradient.
\end{example}

Observe that the steepest operator is scale-invariant: only the direction of $g(\theta)$ matters, not its magnitude, so doubling the gradient leaves the step unchanged. Similarly Adam also normalizes the update \citep{tsilivis2025flavors, jacobs2026never}, which is precisely what makes it stable for transformer training. As we show in \S~\ref{section:properties}, scale-invariance implies the operator being bounded but not linear.

A second way to introduce geometry into the update is via a mirror potential, a convex function $R: \mathbb{R}^n \rightarrow \mathbb{R}$ whose Hessian $\nabla^2 R(\theta)$ defines a local metric at each point. Rather than projecting the gradient onto a norm ball, the mirror operator rescales it by the inverse metric. For this to be well-defined and well-behaved, we need to make assumptions on $R$ we highlight the most important for our goal of contrasting the different implicit biases.

\begin{definition}(Inverse coercive)\label{definition : Legendre function}
Let $R : \mathbb{R}^d \rightarrow \mathbb{R} \cup \{\infty\}$ be twice differentiable  convex function.
We say that $R$ is inversely $\mu-$coercive iff there exists a constant $\mu > 0$, the coercivity constant, such that for all $\theta, \xi \in \mathbb{R}^n$:
    \begin{equation*}
        \xi^T \left(\nabla^2 R(\theta)\right)^{-1} \xi \geq \mu ||\xi||_{L_2}^2.
    \end{equation*}
\end{definition}

In Definition \ref{definition : Legendre function}  inverse coercivity is the condition that controls how aggressively the mirror map rescales the gradient, moreover if the objective $f$ satisfies the PL-inequality (Definition \ref{definition : PL inequality}) it recovers linear convergence in the gradient flow setting \citep{jacobs2024maskmirrorimplicitsparsification}. For completeness, in Definition \ref{def : legendre bregman} (Legendre and Bregman)  we recall two characterizations that make the mirror operator well-behaved.
As the examples below illustrate, gradient descent and Hyperbolic entropy mirror maps that induce an $L_2$ and $L_1$ bias are inverse coercive with $\mu =1$ and $\mu = \gamma$ respectively, while the $\cosh$-entropy that induces an $L_{\infty}-$bias is not.

\begin{definition}(Mirror operator)
    Given the triple $(g, \theta, \nabla R)$ the mirror descent operator $M_{R}: \mathbb{R}^n \times \mathbb{R}^n \rightarrow \mathbb{R}^n $ with respect to the mirror map $R$ is defined as:
    \begin{equation*}
        M_R(\theta, g) : = \nabla^2 R^{-1}(\theta) g(\theta) \qquad \text{such that} \qquad \theta_{k + 1} = \theta_k - \eta M_R(\theta_k, g_k) \qquad \text{for } k \in [T].
    \end{equation*}
\end{definition}

\begin{example}(Gradient descent, $L_2$-bias)
    Consider $R(\theta) = \frac{1}{2}|| \theta||_2^2$ then $M_R(\theta,g) = g(\theta)$ and satisfies all properties in Definition \ref{definition : Legendre function}.
\end{example}

\begin{example}(Hyperbolic entropy, $L_1$-bias)
    Consider $R_{\gamma}(\theta) = \sum_{i =1}^n \theta_i \text{arcsinh}(\frac{\theta_i}{\gamma}) - \sqrt{\theta_i^2 + \gamma^2}$ then $M_R(\theta, g)  = \sqrt{\theta^2 + \gamma^2} g(\theta)$ and satisfies all properties in Definition \ref{definition : Legendre function} when $\gamma > 0$. Moreover this is related to the entropy mirror map $\nabla R(\theta) = \log \theta$ by taking $\gamma \rightarrow 0$.
\end{example}

\begin{example}($\cosh$-entropy, $L_{\infty}$-bias \citep{pesme2024implicit})
    Consider $R(\theta) = \sum_{i =1}^n \cosh(\theta_i)$,  then $M_R(\theta, g)  = \frac{1}{\cosh(\theta)} g(\theta)$. This is a Legendre function and is
    admissible as a mirror map, but fails inverse $\mu$-coercivity: the rescaling $1/\cosh(\theta)$ vanishes as $|\theta| \rightarrow \infty$, which progressively suppresses gradient updates for large weights and can lead to slow convergence.
\end{example}

\paragraph{Implicit bias.}

We now recall what bias each operator class induces. We work in the setting of binary classification on separable data with $K$ points $\{x_i, y_i\}_{i=1}^K$,
$(x_i, y_i) \in \mathbb{R}^n \times \{\pm 1\}$, exponential loss $\ell_i(\theta)
:= \exp(-y_i \langle \theta, x_i \rangle)$, and total objective $\mathcal{L}(\theta)
:= \sum_{i \in [K]} \ell_i(\theta)$. 
Moreover, we let the $\eta \rightarrow 0$ and consider the flow setting. On linearly separable data, the resulting iterates $\theta_t$ diverge along a fixed limiting direction $\bar{\theta} := \lim_t \theta_t / \|\theta_t\|$ \citep{gunasekar2020characterizing}, which the implicit bias characterizes. In general $\bar{\theta}$ converges in direction to the a KKT point of an optimization problem with objective $C(\theta): \mathbb{R}^n \rightarrow \mathbb{R}$:
\begin{equation*}
    \min\, C(\theta) \qquad \text{such that} \qquad y_i \langle \theta, x_i
    \rangle \geq 1 \quad \text{for all } i \in [K].
\end{equation*}
For steepest descent $C(\theta) = \|\theta\|$ \citep{tsilivis2025flavors} and for mirror descent $C(\theta) = \phi_\infty(\theta)$  \citep{pesme2024implicit} which is the so-called horizon function associated to $R$. In the steepest descent case the implicit bias correspondence with sign and coordinate descent is immediately clear.
In the mirror descent case, for our goal, we only need to know that $\phi_\infty(\theta) \sim || \theta||_1$ for the hyperbolic entropy and $\phi_\infty(\theta) \sim || \theta||_{\infty}$ for the $\cosh$-entropy.

\section{Operator characterization}\label{section:properties}
We first characterize the two operator classes defined in \S~\ref{section : notation}. The steepest operator is well-behaved in terms of step size, it always returns a bounded direction, but it achieves this by throwing away gradient magnitude entirely, which is a nonlinear operation. The mirror operator, by contrast, rescales the gradient smoothly and proportionally,
which is linear, but has no built-in ceiling on how large the step can be.

\begin{lemma}[See also Lemma \ref{lemma : steepest descent}.]\label{lemma:steepest-bounded-nonlinear}
    The steepest operator is bounded but not linear.
\end{lemma}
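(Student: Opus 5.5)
The plan is to prove the two halves separately, working directly from the definition of the steepest operator $S_{\|\cdot\|}(\theta,g)=\operatorname{arg\,max}_{\|d\|\le 1}\langle g(\theta),d\rangle$.

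\emph{Boundedness.} By construction the operator always returns a maximizer $d^\ast$ that lies in the unit ball, so $\|S_{\|\cdot\|}(\theta,g)\|\le 1$. The norm in Definition~\ref{definition : bounded operator} is unspecified, but all norms on $\mathbb{R}^n$ are equivalent, so there is a constant $c_n$ with $\|d^\ast\|_{\ast\ast}\le c_n\|d^\ast\|\le c_n$ for whichever reference norm is used. It follows that $\|S(\theta,g)\|\le c_n\le c_n(1+\|\theta\|)$, and the choice $C=c_n$ works uniformly in $(\theta,g)$.

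One point needs care. When the argmax is not unique (for example sign descent at $g_i=0$, or coordinate descent with ties), I will treat $S$ as any fixed selection from the argmax set. The bound then holds for every selection, because each element of the argmax set lies in the unit ball.

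\emph{Non-linearity.} The key property is scale invariance: for $\lambda>0$,
\[
\operatorname{arg\,max}_{\|d\|\le 1}\langle \lambda g,d\rangle=\operatorname{arg\,max}_{\|d\|\le 1}\langle g,d\rangle,
\]
so $S(\theta,\lambda g)=S(\theta,g)$. Homogeneity would instead require $S(\theta,\lambda g)=\lambda S(\theta,g)$. Taking $\lambda=2$ and any $g\neq 0$, these two requirements give $S(\theta,g)=2S(\theta,g)$, hence $S(\theta,g)=0$. But for $g\neq 0$ the maximal value is $\|g\|_\ast>0$, so $d^\ast\neq 0$. This is a contradiction, and the homogeneity axiom of Definition~\ref{definition : linear opperator} fails.

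As a concrete sanity check, sign descent in $n=1$ gives $S(\theta,2)=1\ne 2=2S(\theta,1)$. Additivity also fails there, since $\operatorname{sign}(1+(-1))=0$ while $\operatorname{sign}(1)+\operatorname{sign}(-1)=0$ matches, but $\operatorname{sign}(1+1)=1\ne 2$.

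The only mildly delicate step is the well-definedness issue, namely nonuniqueness of the argmax and the case $g=0$. I will state the selection convention explicitly. The non-linearity argument uses only a single nonzero $g$, so it is unaffected by how $S(\theta,0)$ is defined.
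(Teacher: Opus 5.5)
Your proof is correct and follows the paper's own argument: boundedness holds because the output is an argmax over the unit norm ball (you add the norm-equivalence step the paper leaves implicit), and non-linearity holds because scale invariance $S(\theta,\lambda g)=S(\theta,g)$ for $\lambda>0$ is incompatible with homogeneity, with sign descent as the concrete witness. Your extra observation that the maximizer is nonzero for $g\neq 0$ makes the general contradiction rigorous where the paper only asserts it; the additivity aside is garbled but unnecessary, since failure of homogeneity already suffices.
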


\begin{lemma}[See also Lemma \ref{lemma : mirror linear opperator}]\label{lemma:mirror-linear-unbounded}
    The mirror operator is linear and unbounded in general.
\end{lemma}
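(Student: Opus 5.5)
The plan is to verify the two properties of Definitions \ref{definition : linear opperator} and \ref{definition : bounded operator} directly from the definition $M_R(\theta,g) = \nabla^2 R(\theta)^{-1} g(\theta)$, and then exhibit one concrete mirror map, taken from the examples above, that violates boundedness.

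\textbf{Linearity.} For fixed $\theta$, the matrix $\nabla^2 R(\theta)^{-1}$ does not depend on $g$. It is well defined because $R$ is assumed twice differentiable and strictly convex (Legendre), so the Hessian is positive definite and hence invertible. The map $g \mapsto \nabla^2 R(\theta)^{-1} g$ is therefore multiplication by a fixed matrix. This gives additivity,
\[
M_R(\theta, g+h) = \nabla^2 R(\theta)^{-1}g + \nabla^2 R(\theta)^{-1}h = M_R(\theta,g) + M_R(\theta,h),
\]
and homogeneity $M_R(\theta, \lambda g) = \lambda M_R(\theta,g)$ for every $\lambda \in \mathbb{R}$. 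This settles the first claim with no further work. The only subtlety is reading $g(\theta)$ as the direction argument $g$, consistent with the composition convention $A(\theta, B(\theta,g))$.

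\textbf{Unboundedness in general.} Boundedness requires a single constant $C$ with $\|M_R(\theta,g)\| \le C(1+\|\theta\|)$ for all $\theta$ and $g$. The natural counterexample is gradient descent, $R(\theta)=\tfrac12\|\theta\|_2^2$, for which $M_R(\theta,g)=g$. I fix $\theta = 0$ and take $g = t e_1$. Then $\|M_R(0,g)\| = t$, which exceeds $C(1+0)=C$ as soon as $t > C$, so no such constant exists. The same argument applies to the hyperbolic entropy, where at $\theta=0$ the operator gives $M_R(0,g) = \gamma g$. More generally, linearity together with $\nabla^2 R(\theta)^{-1} \neq 0$ forces unboundedness at any point: scaling $g$ scales the output without limit while the right-hand side $C(1+\|\theta\|)$ stays fixed.

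I expect no real obstacle. The only point needing care is the qualifier ``in general'': the claim is existential, namely that some admissible mirror maps give unbounded operators. In fact the observation above shows that every mirror operator with an invertible Hessian is unbounded, which I would state as a remark strengthening the claim.
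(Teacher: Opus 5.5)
Your proof is correct and essentially the paper's. Linearity is read off from multiplication by the fixed matrix $\nabla^2 R(\theta)^{-1}$. Unboundedness is witnessed by gradient descent in both: the paper keeps a fixed oracle $g(\theta)=1/\theta$ (from $f=\log\theta$) and lets $\theta\to 0^+$, while you fix $\theta=0$ and scale $g=te_1$. This is the same mechanism, and your version matches the quantifier over $g\in\mathbb{R}^n$ in Definition~\ref{definition : bounded operator} more directly, which also makes your remark that every mirror operator is unbounded correct.

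One small correction: strict convexity does not by itself give a positive definite Hessian (e.g.\ $\theta^4$ at $0$). Invertibility should instead be taken as part of the standing assumption that $\nabla^2 R(\theta)^{-1}$ exists, which is what the paper does.
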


Together, Lemmas~\ref{lemma:steepest-bounded-nonlinear}
and~\ref{lemma:mirror-linear-unbounded} confirm the dichotomy summarized in
Figure~\ref{fig: dichotomy}: each class has exactly the property the other lacks.
Explaining why the $\cosh$-entropy fails as a route to $L^\infty$ stability: to equalize gradient coordinates via a linear rescaling, the inverse Hessian of $R$ must damp large components and amplify small ones, which forces it to decay for large weights and lose coercivity, and with that linear convergence \citep{jacobs2024maskmirrorimplicitsparsification}. The steepest operator achieves the same bias through scale-invariance rather than linear rescaling, and is bounded by construction. Stability is therefore better achieved by steepest descent, not mirror descent.

\paragraph{Non-commutative and inheritance.}
Since the two classes have complementary properties, a natural question is whether composing them yields an operator that inherits both. The answer depends critically on the order of composition: one ordering leads to steepest operator alone, while the other, used in HORST, leverages both geometries. This is due to the non-commutative nature between the operator classes.

\begin{lemma}\label{lemma : sign mirror redundant mirror}
    Consider $S_{|| \cdot ||_{\infty}}$ and $M_R$ with $R$ any separable mirror map then $S_{|| \cdot ||_{\infty}} M_R = S_{|| \cdot ||_{\infty}}$.
\end{lemma}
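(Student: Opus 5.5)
The plan is to unfold the composition from the definition and reduce it to a coordinatewise identity. By definition, $S_{\|\cdot\|_\infty} M_R(\theta, g) = S_{\|\cdot\|_\infty}\bigl(\theta, M_R(\theta,g)\bigr)$, and by the sign-descent example, $S_{\|\cdot\|_\infty}(\theta, h) = \operatorname{sign}(h)$. So it suffices to show that $\operatorname{sign}\bigl(\nabla^2 R(\theta)^{-1} g(\theta)\bigr) = \operatorname{sign}(g(\theta))$ coordinatewise, for every $\theta$ and $g$.

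First, use separability. Write $R(\theta) = \sum_{i=1}^n r_i(\theta_i)$ with each $r_i$ a twice differentiable convex scalar function. Then the Hessian is diagonal, $\nabla^2 R(\theta) = \operatorname{diag}\bigl(r_1''(\theta_1), \dots, r_n''(\theta_n)\bigr)$. Its inverse is $\operatorname{diag}(1/r_i''(\theta_i))$, so the mirror operator acts coordinatewise: $M_R(\theta,g)_i = g_i(\theta)/r_i''(\theta_i)$.

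Second, show that each scalar factor is strictly positive. Convexity gives $r_i''(\theta_i) \ge 0$. Invertibility of $\nabla^2 R(\theta)$ is implicit in the very definition of $M_R$ (and is part of the Legendre assumptions of Definition~\ref{def : legendre bregman}), so $r_i''(\theta_i) \neq 0$ and hence $r_i''(\theta_i) > 0$. If inverse coercivity (Definition~\ref{definition : Legendre function}) is assumed, positivity follows even more directly: testing with $\xi = e_i$ gives $1/r_i''(\theta_i) \ge \mu > 0$. The $\cosh$-entropy, which is not coercive, is still covered, since $1/\cosh(\theta_i) > 0$.

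Finally, apply the fact that $\operatorname{sign}(c\,x) = \operatorname{sign}(x)$ for every scalar $c > 0$, including the case $x = 0$, where both sides vanish. Applied coordinatewise, this gives $\operatorname{sign}\bigl(M_R(\theta,g)\bigr) = \operatorname{sign}(g(\theta))$. Therefore $S_{\|\cdot\|_\infty} M_R(\theta,g) = S_{\|\cdot\|_\infty}(\theta,g)$ for all $(\theta,g)$, which is the claim.

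The only delicate step is the second: ruling out a zero or negative diagonal entry. I will state explicitly that the standing assumptions make $\nabla^2 R(\theta)$ positive definite, since the identity fails without that. A secondary convention issue is that when $g_i = 0$ the argmax defining $S_{\|\cdot\|_\infty}$ is not unique in coordinate $i$. Because $M_R$ preserves zero coordinates exactly, any fixed tie-breaking rule, such as $\operatorname{sign}(0) = 0$, is applied identically on both sides, so the equality is unaffected.
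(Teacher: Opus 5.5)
Your proof is correct and matches the paper's direct calculation: separability makes $\nabla^2 R(\theta)^{-1}$ a positive diagonal matrix, so the sign of each coordinate is unchanged. You add detail on positivity of the diagonal entries and on ties at $g_i=0$, and one parenthetical needs fixing: strict convexity in the Legendre definition does not by itself force $r_i''>0$ (e.g.\ $x^4$ at $0$), so rest the argument on the invertibility already required to define $M_R$, as your main line does.
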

Proof.
This follows from a direct calculation. We know that $R$ is separable and its Hessian is positive definite and $S_{|| \cdot||_{\infty}}$ corresponds to sign descent this gives for any $\theta \in \mathbb{R}^n$ and $g \in \mathbb{R}^n$:
\begin{align*}
    S_{|| \cdot ||_{\infty}} M_R (\theta, g) = \text{sign}(\nabla^2 R^{-1}(\theta) g(\theta)) = \text{sign}(g(\theta)) = S_{|| \cdot ||_{\infty}}(\theta, g). \qquad \square
\end{align*}


\begin{corollary}\label{lemma : commuting operators}
    The mirror and steepest operator do not commute i.e. $[S_{|| \cdot ||}, M_R] \neq 0$.
\end{corollary}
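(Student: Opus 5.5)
The corollary only asks for non-commutativity, so one explicit witness suffices. It should be a single triple $(\theta,g,\nabla R)$ and the norm $\|\cdot\|_\infty$ for which $S_{\|\cdot\|_\infty} M_R(\theta,g) \neq M_R S_{\|\cdot\|_\infty}(\theta,g)$. The plan is to use Lemma~\ref{lemma : sign mirror redundant mirror}, which already evaluates one ordering: for any separable $R$ we have $S_{\|\cdot\|_\infty}M_R = S_{\|\cdot\|_\infty}$, i.e.\ $S_{\|\cdot\|_\infty}M_R(\theta,g)=\operatorname{sign}(g(\theta))$. For the reverse ordering, the definition of composition gives
\begin{equation*}
M_R S_{\|\cdot\|_\infty}(\theta,g) = \nabla^2R^{-1}(\theta)\,\operatorname{sign}(g(\theta)).
\end{equation*}
The Lie bracket is therefore $[S,M_R](\theta,g) = \bigl(I - \nabla^2 R^{-1}(\theta)\bigr)\operatorname{sign}(g(\theta))$. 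It remains to exhibit a point at which this vector is nonzero.

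Take $R=R_\gamma$, the hyperbolic entropy with $\gamma>0$, which is separable. Its inverse Hessian is diagonal with entries $\sqrt{\theta_i^2+\gamma^2}$. Choose any $\theta$ with $\theta_1^2+\gamma^2\neq 1$, for instance $\theta=0$ with $\gamma\neq1$, or $\theta_1 = 1$ for any $\gamma>0$. Choose any $g$ with $g_1(\theta)\neq 0$, for instance a linear objective with constant gradient $e_1$. The first coordinate of the bracket is then $(1-\sqrt{\theta_1^2+\gamma^2})\operatorname{sign}(g_1)\neq 0$, so $[S_{\|\cdot\|_\infty},M_R]\neq0$. The only point needing care is that the paper treats $g$ as an oracle $g(\theta)$. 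The witness must therefore come from an admissible objective, which a linear $f$ provides.

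This handles the $L_\infty$ steepest operator, which is the case the paper cares about. The statement writes a generic $S_{\|\cdot\|}$, so I would read it as "not commutative in general": an example with one norm and one mirror map suffices. To make the point more robust, I would note the structural reason the two orderings differ. $M_RS$ is not scale-invariant in $\theta$-dependent directions, because its output magnitude $\sqrt{\theta^2+\gamma^2}$ grows with $\theta$. By Lemma~\ref{lemma:steepest-bounded-nonlinear}, $SM_R$ is bounded by a constant $C$ independent of $\theta$. Letting $\|\theta\|\to\infty$ then gives $\|M_RS(\theta,g)\|\to\infty$ while $\|SM_R(\theta,g)\|$ stays bounded. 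This second argument extends to any norm once one checks that $S_{\|\cdot\|}(\theta,g)\neq0$ for $g\neq0$, which holds since the maximizer lies on the unit sphere.

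The main obstacle is only this interpretive one, about the generality of $\|\cdot\|$ in the statement. The computation itself is immediate from Lemma~\ref{lemma : sign mirror redundant mirror}.
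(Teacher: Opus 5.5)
Your proposal is correct and follows the paper's proof: both use Lemma~\ref{lemma : sign mirror redundant mirror} to get $S_{\|\cdot\|_\infty}M_R=S_{\|\cdot\|_\infty}$, and both then observe that $M_RS_{\|\cdot\|_\infty}(\theta,g)=\nabla^2R^{-1}(\theta)\,\operatorname{sign}(g(\theta))$ differs from $\operatorname{sign}(g(\theta))$ for a non-Euclidean mirror map. Your explicit witness ($R_\gamma$ with $\theta_1^2+\gamma^2\neq 1$ and $g_1\neq 0$, coming from a linear objective) makes precise the paper's looser claim that any $R\neq\frac12\|\theta\|_2^2$ supplies such a $\theta$.
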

Proof. 
Consider the the setting of Lemma \ref{lemma : sign mirror redundant mirror}. Then we know that $S_{|| \cdot ||_{\infty}} M_R = S_{|| \cdot ||_{\infty}}$. But now consider a mirror map that is not $\frac{1}{2}||\theta ||_2^2$ we have that there is a $\theta \in \mathbb{R}^n$ such that:
\begin{equation*}
   M_R S_{|| \cdot ||_{\infty}}(\theta ,g)  = \nabla^2 R^{-1}(\theta)\ \text{sign}(g(\theta)) \ne \text{sign}(g(\theta)).
\end{equation*}
Therefore we do not have $[S_{|| \cdot ||_{\infty}}, M_R] = 0$ leading to a contradiction. $\square$

Lemma~\ref{lemma : sign mirror redundant mirror} and Corollary~\ref{lemma : commuting operators}
show that the order of composition matters: if we apply the mirror map first, sign descent removes the effect of the mirror map. If we
apply it second, the mirror map gets to act on the sign step and can potentially introduce the $L_1$ sparsity bias. 

Thus, we have shown that the steepest operator is bounded but not linear, and the mirror operator is linear but not bounded. Each class has exactly what the other lacks, and composing them in the right order is what allows HORST to get the best of both worlds as we will show next in Corollary \ref{corollary:composition-bounded}.

\begin{corollary}\label{corollary:composition-bounded}
    For any norm $\|\cdot\|$ and any separable mirror map $R$ such that $|| \nabla^2 R^{-1}(\theta) || \leq C(1 + ||\theta||)$ for all $\theta \in \text{dom} R$, the composed operator $M_R S_{\|\cdot\|}$ is bounded.
\end{corollary}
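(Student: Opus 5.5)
The argument is a direct estimate that chains the boundedness of the steepest operator (Lemma~\ref{lemma:steepest-bounded-nonlinear}) with the assumed growth bound on the inverse Hessian. Fix any norm $\|\cdot\|$ and write $d := S_{\|\cdot\|}(\theta, g)$. By definition $d$ maximizes $\langle g, d\rangle$ over the unit ball, so $\|d\| \leq 1$ regardless of $g$. This is the scale-invariance already used in Lemma~\ref{lemma:steepest-bounded-nonlinear}. The composed operator is $M_R S_{\|\cdot\|}(\theta, g) = \nabla^2 R^{-1}(\theta)\, d$. Here I read the definition of $M_R$ as acting on the vector supplied by the inner operator, as in the composition rule $A(\theta, B(\theta,g))$.

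The key step is the operator-norm estimate
\begin{equation*}
\|M_R S_{\|\cdot\|}(\theta,g)\| = \|\nabla^2 R^{-1}(\theta)\, d\| \leq \|\nabla^2 R^{-1}(\theta)\|_{\mathrm{op}}\, \|d\| \leq \|\nabla^2 R^{-1}(\theta)\|_{\mathrm{op}}.
\end{equation*}
The hypothesis $\|\nabla^2 R^{-1}(\theta)\| \leq C(1+\|\theta\|)$ then gives the bound $C(1+\|\theta\|)$ with $C$ independent of $g$ and $\theta$, which is exactly Definition~\ref{definition : bounded operator}.

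The only step needing care is interpreting $\|\nabla^2 R^{-1}(\theta)\|$ and matching norms. Since $R$ is separable, $\nabla^2 R^{-1}(\theta)$ is diagonal with entries $h_i(\theta_i) := 1/R_i''(\theta_i) > 0$. I would read the hypothesis either as a bound on the induced operator norm or as a bound on the vector $(h_i(\theta_i))_i$; these agree up to constants. For a diagonal matrix the $\ell_\infty \to \ell_\infty$ operator norm is $\max_i h_i(\theta_i) = \|h(\theta)\|_\infty$. By equivalence of norms on $\mathbb{R}^n$, which the paper invokes after Definition~\ref{definition : bounded operator}, any choice of norm changes $C$ only by a dimension-dependent factor. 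Concretely, I would bound $\|d\|_\infty \leq c_1\|d\| \leq c_1$ and $\|\nabla^2 R^{-1}(\theta) d\| \leq c_2 \|h(\theta)\|_\infty \|d\|_\infty$, and then absorb $c_1 c_2$ into $C$.

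As a sanity check I would illustrate the result with the hyperbolic entropy, where $h_i(\theta_i) = \sqrt{\theta_i^2+\gamma^2} \leq |\theta_i| + \gamma$. This gives $\|h(\theta)\|_\infty \leq \max(1,\gamma)(1+\|\theta\|_\infty)$, so the hypothesis holds and $M_{R_\gamma} S_{\|\cdot\|_\infty}$ (HORST's ordering) is bounded. This contrasts with $M_R$ alone, which is unbounded by Lemma~\ref{lemma:mirror-linear-unbounded}.
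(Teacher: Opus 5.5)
Your proposal is correct and is essentially the paper's own argument: bound $\|\nabla^2 R^{-1}(\theta)\,S_{\|\cdot\|}(\theta,g)\|$ by the operator norm of $\nabla^2 R^{-1}(\theta)$ times $\|S_{\|\cdot\|}(\theta,g)\|\le 1$, then apply the growth hypothesis, absorbing norm-equivalence constants as the paper does with its $C_1, C_2$. Your remarks on the diagonal structure and the hyperbolic-entropy check agree with the paper's subsequent remark, but the bound does not need them.
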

Proof. This follows from boundedness of the steepest descent operator and linearity of the mirror operator i.e. $\|M_R S_{\|\cdot\|}(\theta, g)\| = \|\nabla^2 R^{-1}(\theta)\,
    S_{\|\cdot\|}(\theta, g)\| \leq C_1 \|\nabla^2 R^{-1}(\theta)\| \
    \|S_{\|\cdot\|}(\theta, g)\| \leq C_1\|\nabla^2 R^{-1}(\theta)\| \leq C_2(1 + ||\theta||)$.
$\square$

\begin{remark}
    The growth condition in Corollary \ref{corollary:composition-bounded} on the inverse hessian $\nabla^2 R^{-1}$ is satisfied by all considered example mirror maps: gradient descent, hyperbolic entropy and $\cosh$ entropy.
\end{remark}


\paragraph{Implicit bias of composition.}
Motivated by Corollary \ref{corollary:composition-bounded} we propose to use the bounded composed operator $M_R S_{\|\cdot\|}$:
\begin{equation*}
    M_R S_{\|\cdot\|}(\theta, g) = \nabla^2 R^{-1}(\theta)\,
    \operatorname*{arg\,max}_{\|d\| \leq 1} \langle g(\theta), d \rangle  \qquad \theta_{k + 1} = \theta_k - \eta M_R S_{|| \cdot ||}(\theta_k, g_k) \qquad \text{for } k \in [T].
\end{equation*}
Concretely, we focus here on the entropy mirror map $\nabla R(\theta) = \text{log} \ \theta$ and steepest descent norm $|| \cdot ||_p$ with $p \in [2, \infty)$. These are closely related to the Hyperbolic entropy $(\gamma \rightarrow 0)$ and sign descent $(p \rightarrow \infty)$. We can characterize the implicit bias in the binary classification setting on linear separable data for the continuous-time flow ($\eta \rightarrow 0$), described by the steepest-mirror differential inclusion:
\begin{equation}\label{eq:mirror-steepest}
    \frac{d\theta_t}{dt} \in -\nabla^2 R^{-1}(\theta_t)\,
    \text{sign}(\nabla \mathcal{L}(\theta_t))|\nabla\mathcal{L}(\theta_t)|^{q-1} /||\nabla\mathcal{L}(\theta_t)||_q \ \ \text{ where } q \text{ is such that } \frac{1}{p} + \frac{1}{q} =1.
\end{equation}
\begin{theorem}\label{theorem:implicit-bias}
    Consider steepest-mirror descent with respect to the $L_p$-norm, $p \in
    \mathbb{N}_{\geq 2}$, and mirror map $\nabla R(\theta) = \log(\theta)$. Then the
    iterates of Eq.~\eqref{eq:mirror-steepest} converge in direction to a KKT
    point of:
    \begin{equation*}
        \min\, \|\theta\|_1 \qquad \text{such that} \qquad y_i \langle \theta,
        x_i \rangle \geq 1 \quad \text{for all } i \in [K].
    \end{equation*}
\end{theorem}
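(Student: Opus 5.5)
The plan is to follow the route indicated in the related-work discussion. I will rewrite the composed steepest-mirror flow as an ordinary steepest descent flow in a reparameterized variable, and then invoke the steepest-descent implicit bias result of \citet{tsilivis2025flavors} for homogeneous models.

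For the entropy mirror map, $\nabla^2 R^{-1}(\theta) = \mathrm{diag}(\theta)$, so Eq.~\eqref{eq:mirror-steepest} reads $\dot\theta_t = -\theta_t \odot s(\nabla\mathcal{L}(\theta_t))$. Here $s(g) := \mathrm{sign}(g)|g|^{q-1}/\|g\|_q$ is the $L_p$ steepest direction, so $\langle s(g), g\rangle = \|g\|_q$ and $\|s(g)\|_p = 1$. The sign of each coordinate is preserved along the flow, so I work on an orthant and, after flipping the signs of features, assume $\theta>0$.

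\textbf{Step 1 (reparameterization).} Choose a variable $w$ with $\theta = \phi(w)$ so that the flow becomes $L_p$-steepest descent on $\tilde{\mathcal{L}}(w) := \mathcal{L}(\phi(w))$. The chain rule gives $\nabla_w\tilde{\mathcal L} = \phi'(w)\odot \nabla_\theta\mathcal L$. Steepest flow in $w$ then gives $\dot\theta = -\phi'(w)\odot s(\phi'(w)\odot\nabla\mathcal L)$. I want this to match $-\theta\odot s(\nabla\mathcal L)$ up to a positive time change, which holds exactly when the coordinatewise rescaling by $\phi'$ commutes with $s$ in the right way. I expect this to be the main obstacle: the rescaling $\phi'$ distorts $s$ unless $p=\infty$ or $p=2$. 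I would first try the multiplicative/diagonal-network parameterization $\theta = u\odot u$ (or $u^{\odot L}$, using $p\in\mathbb N$). I would then check whether the mismatch can be absorbed into a time change together with a modified norm. If that fails, I would instead adapt the argument of \citet{tsilivis2025flavors} directly, with the mirror potential playing the role of the norm, as in \citet{jacobs2026never}.

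\textbf{Step 2 (homogeneous steepest flow).} The margin function $w\mapsto y_i\langle\phi(w),x_i\rangle$ is $L$-homogeneous for $\phi(w)=w^{\odot L}$, and separability gives a point with positive margin. The generalized steepest-descent result of \citet{tsilivis2025flavors} for homogeneous models then says the normalized $w$ converges in direction to a KKT point of $\min \|w\|_p$ subject to $y_i\langle w^{\odot L},x_i\rangle\ge1$.

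\textbf{Step 3 (translate back).} I will express the KKT conditions in $\theta$. With $\theta=w^{\odot L}$ one has $\|w\|_p^p = \sum_i \theta_i^{p/L}$. Choosing $L=p$, or arguing via the log-potential's horizon function (for $\nabla R=\log$, the horizon of the $\gamma\to0$ hyperbolic entropy is $\|\cdot\|_1$), the objective becomes $\|\theta\|_1$ on the orthant. Stationarity, $\nabla_w\|w\|_p = \sum\lambda_i\nabla_w(\text{margin}_i)$, multiplied by $\phi'(w)$, yields $\mathrm{sign}(\theta) = \sum_i\lambda_i y_i x_i$ on the support and complementary slackness. These are precisely the KKT conditions of the stated $L_1$ problem.

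I will verify separately that directional convergence in $w$ implies directional convergence in $\theta$. This holds since $\phi$ is homogeneous and continuous on the orthant.
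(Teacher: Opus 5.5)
\emph{Verdict: genuine gap.} Your route is the paper's route. You rewrite Eq.~\eqref{eq:mirror-steepest} as an $L_p$-steepest flow of a homogeneous reparameterization, apply \citet{tsilivis2025flavors}, and turn $\|w\|_p^p$ into $\|\theta\|_1$. The gap is Step 1, which carries the whole argument. You leave it open and predict that it fails for $p\notin\{2,\infty\}$, but that prediction is wrong. The $L_p$-steepest direction acts coordinatewise as a power, so a positive diagonal rescaling passes through it. Write $g=\nabla\mathcal L(\theta)$ and $\tilde g=\phi'(w)\odot g$, with $\phi'>0$ on the orthant. Then
\begin{equation*}
\dot\theta=-\phi'(w)\odot\frac{\mathrm{sign}(\tilde g)\,|\tilde g|^{q-1}}{\|\tilde g\|_q}=-\frac{\phi'(w)^{q}\odot\mathrm{sign}(g)\,|g|^{q-1}}{\|\tilde g\|_q}.
\end{equation*}
This is a positive time change of $-\theta\odot\mathrm{sign}(g)|g|^{q-1}$ exactly when $\phi'(w)^q\propto\phi(w)$. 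For $\phi(w)=w^{\odot L}$ the condition is $L^q w^{(L-1)q}\propto w^{L}$, i.e.\ $(L-1)q=L$, i.e.\ $L=p$.

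Consequences:
\begin{itemize}
\item $\theta=u\odot u$ works only for $p=2$.
\item The depth $L=p$ is forced by the dynamics. It is not a free choice in Step 3, so the horizon-function alternative there is moot.
\item Once this is checked, none of your fallbacks are needed.
\end{itemize}
The paper does the same computation with $p$ distinct factors $\theta=\prod_{i\in[p]}w_i$. It uses the balanced initialization $|w_{i,0}|=|\theta_0|^{1/p}$ and a balance equation (the quantities $|w_i|^q-|w_j|^q$ are conserved). Your single-vector $w^{\odot p}$ has balance built in. With Step 1 filled in, it is a slightly cleaner version of the same proof.

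Two points in Steps 2--3 are also too quick, though the paper's proof passes over them as well.
\begin{itemize}
\item Results of the type in \citet{tsilivis2025flavors} assume the trajectory itself reaches positive margin at some time. Existence of a separator does not give this. Since the flow freezes the sign pattern, you would need separability within that orthant.
\item $w$-stationarity is vacuous at coordinates with $w_j=0$. It gives $\sum_i\lambda_iy_ix_{ij}=\mathrm{sign}(\theta_j)$ on the support. It does not give the off-support condition $|\sum_i\lambda_iy_ix_{ij}|\le1$ required by the $L_1$ KKT system. Also, with frozen signs you can at best target the sign-constrained problem. So ``precisely the KKT conditions'' overstates what the translation yields.
\end{itemize}
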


\begin{proof}
   This can be shown by using the steepest mirror flow connection to homogeneous reparameterization as developed in \citep{jacobs2026never} and combining it with the steepest flow characterization of the max margin in \citep{tsilivis2025flavors}. For the full proof see Theorem \ref{theorem : appendix implicit bias}.
\end{proof}

Theorem  \ref{theorem:implicit-bias} establishes that the entropy mirror map $\nabla R(\theta) = \text{log} \ \theta$ can overwrite the implicit bias of steepest descent with respect to an $L_p$-norm for $p \in [2, \infty)$. Moreover, this suggests that the sparsity bias may extend to the limiting regime ($p \rightarrow \infty$, $\gamma \rightarrow 0$). However, establishing this limit rigorously remains an open problem.

\begin{algorithm}[ht]
\caption{HAM}\label{alg: ham}
\begin{algorithmic}
\Require steps $T$, schedule $\eta$, initialization $\theta_\mathrm{init}$, extra regularization $\beta \geq 0$, weight decay $\lambda \geq 0$, relative lr $\alpha \geq 0$.
\For{$k \in 0 \ldots T-1$}
    \State $\theta_{k+\frac{1}{2}} = \theta_k - \mathrm{Adam}\!\left(g(\theta_k),\, \eta\right) - \eta \lambda \theta_k$
    \State $\theta_{k+1} = \theta_{k + \frac{1}{2}}\exp\!\left(
    -\eta \ \left( \alpha \ \text{sign}(\theta_{k + \frac{1}{2}})\ g(\theta_k)+ \beta\right)\right) \qquad \qquad \qquad \ \ \ $  
\EndFor \\
\Return Model weights $\theta_T$
\end{algorithmic}
\end{algorithm}

\section{HORST: optimizer for sparse transformers}\label{sec:algorithm}
We first recall the HAM optimizer scheme  \citep{jacobs2025hamhyperbolicstepregulate} in Algorithm \ref{alg: ham}. This alternates a base optimizer (AdamW) with an exponential step which corresponds to using the mirror map $\nabla R(\theta) = \log(\theta)$ as used in Theorem~\ref{theorem:implicit-bias}. 
Note that this entropy mirror map does not have full domain $\mathbb{R}^n$ and is not inversely coercive on its own motivating the alternating scheme. 
Next, the hyperparameter $\alpha \ge 0$ is used to determine the relative importance of the updates.
Moreover, extra (decoupled) regularization is used to induce sparsity as a default captured by the hyperparameter $\beta \geq 0$.
This motivates three design choices for HORST to turn the composition into a practical optimizer:
\begin{itemize}
    \item We replace the steepest descent step with Adam in the exponential update.
    \item We use decoupled weight decay with strength $\lambda > 0$.
    \item We adopt an alternating scheme from \citep{jacobs2025hamhyperbolicstepregulate} to recover inverse coercivity that applies the exponential update rule after the standard optimizer step, with a hyperparameter $\alpha \geq 0$ that controls the relative strength of the sparsity bias.
\end{itemize}

Note that setting $\alpha = 0$ recovers the base optimizer (AdamW), while increasing $\alpha$ strengthens the sparsity bias.
This gives the update scheme in Algorithm~\ref{alg:cap}. Similar as HAM, HORST requires no additional gradient evaluations; its extra cost is a per-parameter exponential update.
Moreover, $\eta$ is the learning rate, $\lambda$ the decoupled weight decay strength, and for completeness $\beta \geq 0$ the extra regularization parameter as in HAM. Observe that $\beta$ for HORST is doing the same work as the decoupled weight decay $\lambda > 0$.

\begin{algorithm}[ht]
\caption{HORST}\label{alg:cap}
\begin{algorithmic}
\Require steps $T$, schedule $\eta$, initialization $\theta_\mathrm{init}$,
extra regularization $\beta \geq 0$, weight decay $\lambda \geq 0$, relative lr $\alpha \geq 0$.
\For{$k \in 0 \ldots T-1$}
    \State $\theta_{k+\frac{1}{2}} = \theta_k - \mathrm{Adam}\!\left(g(\theta_k),\, \eta\right) - \lambda\eta \theta_k$
    \State $\theta_{k+1} = \theta_{k + \frac{1}{2}}\exp\!\left(
    -\alpha\ \text{sign}(\theta_{k+ \frac{1}{2}}) \ \mathrm{Adam}\!\left(g(\theta_{k}),\, \ \eta\right) -\eta \beta\right) \qquad \qquad \ \ \ $  (Composition)
\EndFor  \\
\Return Model weights $\theta_T$
\end{algorithmic}
\end{algorithm}

\section{Experimental validation}\label{section : experiments}

\paragraph{Toy illustration.}
We first show the importance of composing on Adam with the entropy mirror map itself. We consider binary linear classification on linear separable data. The target solution is sparse and consists of two features. We describe the details of the setting in Appendix \ref{appendix : linear class}. As we show in Figure \ref{fig:acdc weight dist} (Right), using an entropy mirror map after Adam (Exp-Adam) leads to more sparsity, recovering the target solution better highlighted by the green band. In contrast composing in the opposite way (Adam-Exp) leads to overfitting as was the case for Adam in Figure \ref{fig:placeholder}.

\begin{figure}[ht]
    \centering
    \includegraphics[width=0.35\linewidth]{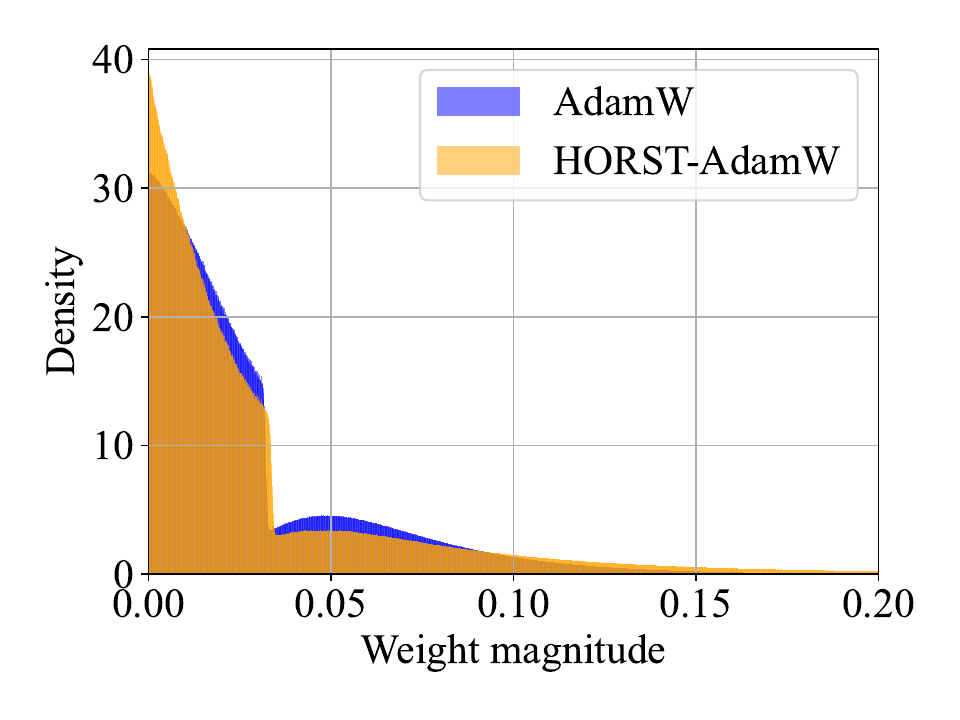}
    \qquad
    \includegraphics[width=0.4\textwidth]{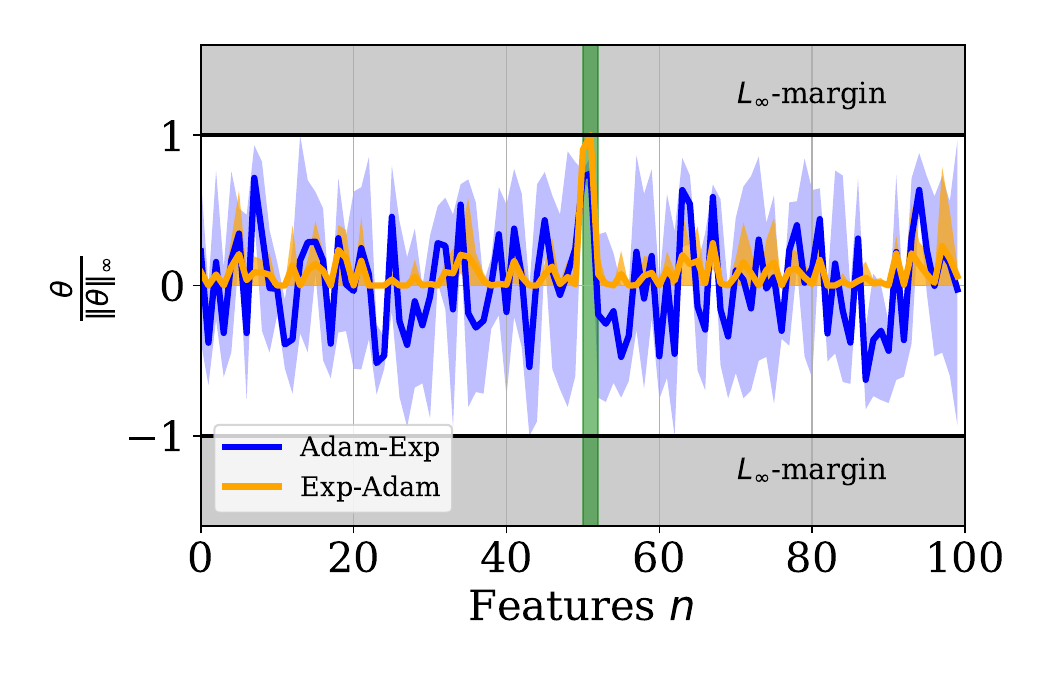}
    \caption{(Left) The final unmasked weight distribution of a DeiT-base trained with AC/DC to $70\%$ sparsity. HORST successfully concentrates more weights at zero finding better generalizable sparse solutions. (Right) Linear separable binary classification with a diagonal linear network $\langle \theta, x \rangle$, Exp-Adam learns a sparser representation while Adam-Exp does not, this is in line with Lemma \ref{lemma : sign mirror redundant mirror}.}
    \label{fig:acdc weight dist}
\end{figure}

\paragraph{Vision and language transformer experiments.}
We evaluate HORST-AdamW on both vision and language transformer settings to test whether its sparsity-inducing optimizer geometry improves performance under pruning. Unless otherwise noted we use the hyperparameters $\alpha =5, \beta =0$ for HORST.\footnote{The * indicates that $\beta =1e-2$ was used instead of $\beta =0$, which gave a slightly better improvement.}
In the vision experiments, HORST-AdamW is combined with the AC/DC sparsification \citep{peste2021acdc} pipeline and applied to DeiT-base and DeiT-small models on ImageNet \citep{imagenet} across high sparsity levels, for details see Appendix \ref{appendix : details vision}. Compared with the AdamW baseline, HORST-AdamW consistently achieves higher validation accuracy, with the largest gains appearing at $80-90\%$ sparsity reported in Table \ref{tab:deit_sparsity_50_70}. 
In the language experiments, dense GPT-2 Small \citep{radford2019language} checkpoints trained on SlimPajama-6B \citep{cerebras2023slimpajama} are evaluated using one-shot layerwise unstructured magnitude pruning without fine-tuning. HORST-AdamW yields lower validation perplexity after pruning than AdamW (Figure \ref{fig:gpt2_small_scale_perplexity}), indicating that the proposed optimizer produces weights that are more robust to sparse compression, for further details see Appendix \ref{appendix : details language}. Overall, these experiments show that composing AdamW with a hyperbolic mirror-style update improves sparse transformer performance across both vision and language tasks while noticeably changing the global weight distribution as shown in Figure \ref{fig:acdc weight dist} and \ref{fig:weight_distribution_small_scale_gpt2_on_slimpajama_dense_7layer_vs_12layer}.

Moreover, we include an additional experiment comparing with HAM in Table~\ref{table : horst ham comparison}. On DeiT-small at $90\%$ sparsity, HAM provides a modest improvement over the AdamW baseline, whereas HORST substantially outperforms both HAM and the baseline. This supports the importance of composing the exponential mirror-style update with the AdamW update itself, rather than applying the gradient-based exponential step used in HAM.

\begin{table}[h]
\caption{Performance in terms of validation accuracy for a DeiT model with different optimizers and the AC/DC sparsification method \citep{peste2021acdc}.}
\label{tab:deit_sparsity_50_70}
\centering
\begin{tabular}{llccc}
\hline
Model & Optimizer & $70\%$ & $80\%$ & $90\%$ \\
\hline
DeiT-base  & AdamW        & $76.15 \pm 0.71$ & $74.18 \pm 0.29$ & $68.86 \pm 2.05$ \\
DeiT-base  & HORST-AdamW  & $\mathbf{78.89 \pm 0.79}$   & $\mathbf{78.63 \pm 1.26}$ & $\mathbf{76.83 \pm 0.92}$\\
\hline
DeiT-small & AdamW        & $71.59 \pm 1.12$ & $67.07 \pm 0.31$ & $59.64 \pm 1.52$
\\
DeiT-small & HORST-AdamW  & $\mathbf{78.42 \pm 0.39}$ &  $\mathbf{75.83 \pm 0.47}$* & $\mathbf{69.41 \pm 2.75}$*\\
\hline
\end{tabular}
\end{table}


\section{Conclusion and future work}
\begin{wrapfigure}{r}{0.45\linewidth}
    \centering
    \includegraphics[width=0.9\linewidth]{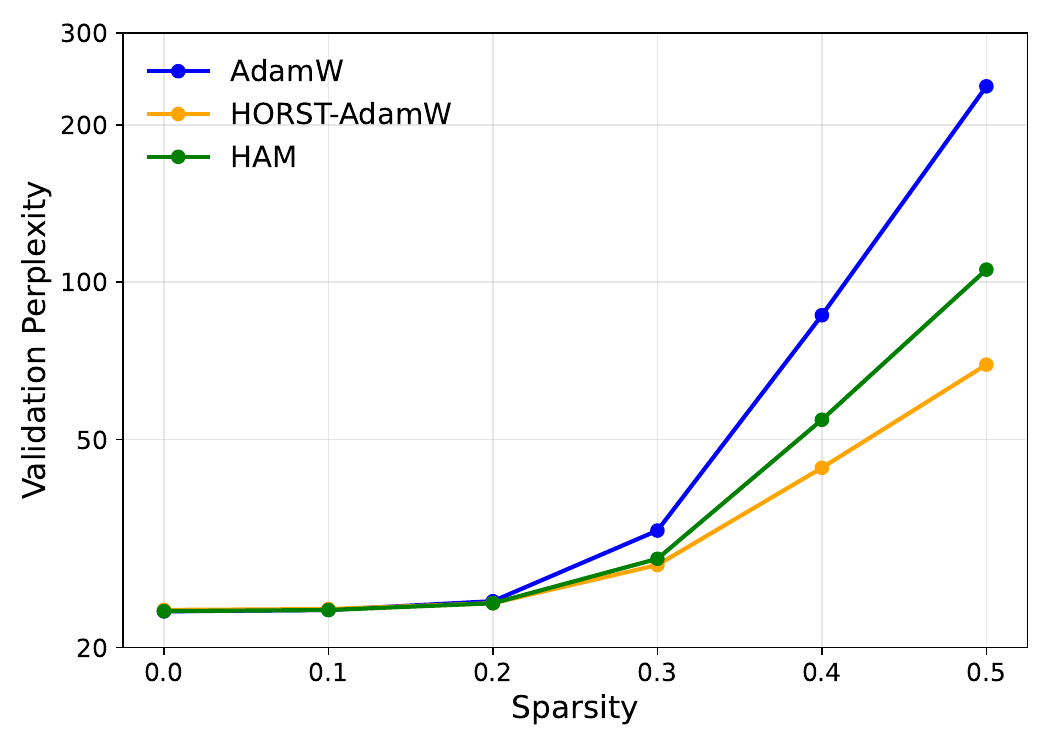}
    \caption{One-shot layerwise unstructured magnitude pruning of dense GPT-2 Small ($\approx$ 124M params) checkpoints trained on SlimPajama-6B with AdamW vs. HAM vs. HORST-AdamW; no fine-tuning. HORST-AdamW consistently achieves lower validation perplexity than both. See Table \ref{tab:gpt2_small_one_shot_pruning_experiment_val_perplexity_table} for details.}
    \label{fig:gpt2_small_scale_perplexity}
\end{wrapfigure}

We have developed a compositional perspective on optimization by treating update rules as operators and analyzing their geometric and algebraic structure. This reveals a fundamental dichotomy: steepest descent operators are bounded but nonlinear and naturally associated with stability and $L_\infty$-type implicit bias, while mirror descent operators are linear but generally unbounded and can induce sparsity-promoting $L_1$ structure through the (hyperbolic) entropy mirror map.

A key insight is that these two operator classes are non-commutative, making the order of composition important. When composed correctly, the resulting dynamics can inherit both the stability of steepest descent and the sparsity bias of mirror descent, leading to an effective mechanism for controlling implicit regularization through operator design.
Beyond the theoretical characterization, the primary practical outcome of this framework is a consistent improvement in sparse training performance. Across transformer settings trained with sparsification pipelines, the composed optimizer significantly improves accuracy for all sparsity levels, with the largest gains in the high sparsity regime where standard adaptive optimizers degrade sharply. This shows that the implicit bias of the optimizer plays an important role in how well the model can be pruned.
However, several directions remain open. First, the $L_\infty$ limit of steepest descent and its interaction with mirror-based updates is not yet fully understood formally, closing this gap would provide a complete characterization of the limiting dynamics underlying sign-based adaptive methods. Second, a promising direction is understanding how spectral descent interacts with steepest-mirror compositions, and whether similar implicit bias control can be achieved. Finally, combining HORST with post-training pruning methods such as SparseGPT is a promising direction, since HORST modifies the geometry of the learned checkpoint while such methods improve the pruning rule applied after training.

\newpage

\paragraph{Acknowledgments}
We gratefully acknowledge the Gauss Centre for Supercomputing e.V. for
funding this project by providing computing time on the
GCS Supercomputer JUWELS at Jülich Supercomputing
Centre (JSC) \citet{juelich2021juwels}. We are also grateful for funding from
the European Research Council (ERC) under the Horizon
Europe Framework Programme (HORIZON) for proposal
number 101116395 SPARSE-ML.

\bibliographystyle{unsrtnat}
\bibliography{neurips_2026}

\newpage
\appendix

\section{Extended related work}
A complementary line of sparse-training works uses the mirror-descent/reparameterization mechanism described above to induce sparsity through optimization dynamics rather than through pruning alone. 
First, \citet{Ziyin2022spredSL} showed that $L_1$-type sparse solutions can be obtained by running vanilla SGD on a Hadamard factorization, while \citet{kolb2025deep} study how increasing the depth of such pointwise factorizations can drive extreme sparsity, sometimes at the cost of performance. 
Moreover, \citet{jacobs2024maskmirrorimplicitsparsification} show that continuous sparsification with the reparameterization $m \odot w$ undergoes an implicit-bias transition from $L_2$ to $L_1$.

\section{Preliminaries implicit bias}

\paragraph{Convex optimization}
We first recall the Bregman divergence, which naturally generalizes the Euclidean distance metric.

\begin{definition}[Bregman divergence]
    Given a differentiable convex function $R : \mathbb{R}^n \rightarrow \mathbb{R}$,
    the Bregman divergence associated to $R$ is:
    $
        D_R(\theta, \xi) := R(\theta) - R(\xi) - \langle \nabla R(\xi),
        \theta - \xi \rangle.
    $
\end{definition}

Moreover we give two basic desired characterizations of the mirror maps in Definition \ref{def : legendre bregman}.

\begin{definition}(Legendre and Bregman function)\label{def : legendre bregman}
     We say $R$ is a Legendre function when the following holds:
 \begin{itemize}
     \item  $R$ is strictly convex on the interior of its domain $\text{int} (\text{dom} R)$.
    \item For any sequence $\{\theta_i\}^{\infty}_{i = 1}$ going to the boundary of $\text{dom} R$, the gradient diverges, i.e.  $\lim_{i\rightarrow \infty}||\nabla R(\theta_i)|| = \infty$.
 \end{itemize}
We say $R$ is a Bregman function if 
\begin{itemize}
    \item For any $\theta\in \text{dom} R$ and $\gamma \in \mathbb{R}$, $\{\xi \in \text{dom} R | D_R(\theta,\xi) \leq \gamma\}$ is bounded.
     \item For any $\theta \in \text{dom} R$ and sequence $\{\theta_i\}^{\infty}_{i=1} \subset \text{int}(\text{dom} R)$ such that $\lim_{i \rightarrow \infty} \theta_i = \theta$, it holds that $\lim_{i\rightarrow \infty} D_R(\theta,\theta_i) \rightarrow 0$.
\end{itemize}
\end{definition}

For standard convergence proofs \citep{wojtowytsch2021stochastic} the PL-inequality can be used.
\begin{definition}(Polyak-\L ojasiewicz inequality)\label{definition : PL inequality}
    Let $\Lambda > 0$ be a constant and $\mathcal{L} \in C^1(\mathbb{R}^n, \mathbb{R})$ then we say $\mathcal{L}$ satisfies the PL-inequality if and only if:
    \begin{equation*}
        ||\nabla \mathcal{L}(\theta)||^2_2 \geq 2 \Lambda (\mathcal{L}(\theta)- \mathcal{L}(\theta^*)) \text{ for all } \theta \in \mathbb{R}^n,
    \end{equation*}
    where $\theta^*$ is a minimizer of $\mathcal{L}$, which is assumed to exist for $\mathcal{L}$.
\end{definition}

Assuming Definition \ref{definition : PL inequality} linear convergence for mirror flow:
\begin{equation*}
    d \theta_t = -\nabla^2 R(\theta_t)^{-1} \nabla \mathcal{L}(\theta_t) dt , \qquad \theta_0  = \theta_{\text{init}},
\end{equation*}
under the inversely coercive mirror map Assumption \ref{def : legendre bregman} can be shown:
\begin{equation*}
    d \mathcal{L} (\theta_t) = - \nabla \mathcal{L}(\theta_t) \nabla^2 R(\theta_t)^{-1} \nabla \mathcal{L}(\theta_t) dt \leq  \nabla - \mu || \mathcal{L}(\theta_t) ||_2^2 dt \leq \mu 2\mathcal{L}\Lambda (\theta_t) dt
\end{equation*}
where we assumed the minimum is attained at zero for simplicity.
Then the result directly follows from applying Gronwall's Lemma as used in \citep{jacobs2024maskmirrorimplicitsparsification}.

\paragraph{Implicit bias}
Now, we provide more details on the implicit of both operator classes. Recall that the setting is binary classification on linear separable data with $K$ points $\{x_i, y_i\}_{i=1}^K$,
$(x_i, y_i) \in \mathbb{R}^n \times \{\pm 1\}$, exponential loss $\ell_i(\theta)
:= \exp(-y_i \langle \theta, x_i \rangle)$, and total objective $\mathcal{L}(\theta)
:= \sum_{i \in [K]} \ell_i(\theta)$. 
We note that \citep{tsilivis2025flavors} result is more general and covers also homogeneous neural networks, which we actually use in the proof as we rely on a homogeneous reparameterization.
Moreover, we let $\eta \rightarrow 0$ and consider the flow setting.
On linearly separable data, the resulting iterates $\theta_t$ diverge along a fixed limiting direction $\bar{\theta} := \lim_t \theta_t / \|\theta_t\|$ \citep{gunasekar2020characterizing}.

For steepest descent with respect to $\|\cdot\|$, this direction solves the max-margin problem \citep{tsilivis2025flavors}:
\begin{equation*}
    \min\, \|\theta\| \qquad \text{such that} \qquad y_i \langle \theta, x_i
    \rangle \geq 1 \quad \text{for all } i \in [K].
\end{equation*}
For mirror descent with a separable coercive map $R$, it solves the analogous problem in the geometry of $R$ \citep{sun2022mirror,
pesme2024implicit}:
\begin{equation*}
    \min\, \phi_\infty(\theta) \qquad \text{such that} \qquad y_i \langle \theta,
    x_i \rangle \geq 1 \quad \text{for all } i \in [K],
\end{equation*}
where $\phi_\infty(\theta) : = \lambda \lim_{\eta \rightarrow 0} \eta \  r^{-1}(R(\theta/\eta)) $, where $\lambda > 0$ is a fixed constant and $r$ is the mirror potential of a single coordinate. This is the so-called horizon function associated to $R$. 
Sign descent and the hyperbolic entropy therefore induce $L^\infty$ and $L^1$ biases respectively.

\section{Main results}
Here we provide full proofs of the statements in the main text.

\paragraph{Boundedness and linearity.}
First we show the basic properties of the operators in Lemmas \ref{lemma : mirror linear opperator} and \ref{lemma : steepest descent}.

\begin{lemma}\label{lemma : steepest descent}
    The steepest operator is bounded but not linear.
\end{lemma}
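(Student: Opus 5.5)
The claim has two parts, and I will handle each directly from the definitions of the steepest operator and of a bounded and a linear operator.

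\textbf{Boundedness.} By definition, $S_{\|\cdot\|}(\theta,g)$ is an element of the unit ball $\{d : \|d\|\le 1\}$, so $\|S_{\|\cdot\|}(\theta,g)\|\le 1$ for every $\theta, g$. The ball is compact, so the argmax exists, and any selection from the argmax set works. The bound in Definition~\ref{definition : bounded operator} is stated with an unspecified norm $\|\cdot\|'$. Equivalence of norms on $\mathbb{R}^n$ gives a constant $c$ with $\|S_{\|\cdot\|}(\theta,g)\|'\le c\,\|S_{\|\cdot\|}(\theta,g)\| \le c \le c(1+\|\theta\|')$. Taking $C=c$, which is independent of $(\theta,g)$, proves boundedness.

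\textbf{Non-linearity.} I will use scale invariance. For any $\lambda>0$, $\langle \lambda g, d\rangle = \lambda\langle g,d\rangle$, so the maximizers for $\lambda g$ and for $g$ coincide, and $S(\theta,\lambda g)=S(\theta,g)$. Now fix any $g\neq 0$ and take $\lambda=2$. Homogeneity would require $S(\theta,2g)=2S(\theta,g)$, which combined with scale invariance gives $S(\theta,g)=0$.

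It remains to rule out $S(\theta,g)=0$. Since $g\ne 0$, there is some $d$ in the unit ball with $\langle g,d\rangle>0$; for example $d=g/\|g\|$ works. Hence the maximum value is positive. But $d=0$ gives $\langle g,0\rangle=0$, so $0$ is not a maximizer. This contradiction shows $S(\theta,2g)\neq 2S(\theta,g)$, so homogeneity fails and the operator is not linear.

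For concreteness, the sign-descent example gives $\mathrm{sign}(2g)=\mathrm{sign}(g)\ne 2\,\mathrm{sign}(g)$.

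The only delicate point is that the argmax may be set-valued for non-strictly convex norms. I will handle this by treating $S$ as any fixed selection. The argument survives because it only uses that every maximizer is nonzero and that the maximizer sets for $g$ and $\lambda g$ are identical, so the same selection can be used for both.
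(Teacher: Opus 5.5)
Your proof is correct and follows the same route as the paper: boundedness because the output lies in the unit ball, and non-linearity from scale invariance $S(\theta,\lambda g)=S(\theta,g)$, with sign descent as the concrete instance. You are more careful than the paper, which never checks that $S(\theta,g)\neq 0$ for $g\neq 0$ (this is needed for scale invariance to contradict homogeneity). One small repair: if $S$ is an arbitrary selection, the claim ``the same selection can be used for both'' is not justified, since $S(\theta,g)$ and $S(\theta,2g)$ may be different maximizers. Instead, note that $\langle g,2S(\theta,g)\rangle$ is twice the positive maximum of $\langle g,d\rangle$ over the unit ball, so $2S(\theta,g)$ lies outside that ball and cannot equal $S(\theta,2g)$.
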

Proof.
\textbf{Linearity:}
A concrete counter example for linearity is the steepest descent operator with respect to $|| \cdot||_{\infty}$ giving for any $g \in \mathbb{R}^n$ and positive $\lambda > 0$:
\begin{equation*}
    S_{|| \cdot ||_{\infty}}(\theta, \lambda g(\theta)) = \text{sign} (\lambda g(\theta)) = \text{sign}(g(\theta)) \neq \lambda \text{sign}(g(\theta)),
\end{equation*}
which is unequal except when $\lambda =1$.
In general, by definition the steepest descent operator is scale invariant violating linearity.

\textbf{Bounded:}
By definition the steepest descent operator is maximum argument on bounded norm ball therefore it is bounded. $\square$

\begin{lemma}\label{lemma : mirror linear opperator}
    The mirror operator is linear and unbounded in general. 
\end{lemma}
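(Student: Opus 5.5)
The plan is to treat the two claims separately, mirroring the proof of Lemma \ref{lemma : steepest descent}. The key observation for linearity is that, for a fixed $\theta$, the mirror operator $M_R(\theta,\cdot)$ is multiplication by the fixed matrix $\nabla^2 R^{-1}(\theta)$. This matrix depends only on $\theta$ and not on the gradient argument, so linearity in the second argument is immediate.

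\textbf{Linearity.} Fix $\theta \in \text{int}(\text{dom} R)$, where $\nabla^2 R(\theta)$ is positive definite (strict convexity, Definition \ref{def : legendre bregman}) and hence invertible. For any $g, h \in \mathbb{R}^n$ and $\lambda \in \mathbb{R}$, matrix–vector distributivity gives
\begin{equation*}
M_R(\theta, g+h) = \nabla^2 R^{-1}(\theta)\,(g+h) = M_R(\theta,g) + M_R(\theta,h).
\end{equation*}
Similarly, $M_R(\theta,\lambda g) = \lambda\, \nabla^2 R^{-1}(\theta)\, g = \lambda M_R(\theta,g)$. Both conditions of Definition \ref{definition : linear opperator} therefore hold. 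The only point requiring care is that the operator acts on the supplied direction $g$ and not on $g(\theta)$ re-evaluated. This is consistent with how composition is defined, where $B(\theta,g)$ is fed into $A$.

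\textbf{Unboundedness in general.} It suffices to give one admissible mirror map for which Definition \ref{definition : bounded operator} fails. Take the simplest example, $R(\theta) = \frac{1}{2}\|\theta\|_2^2$, so that $M_R(\theta,g) = g$. Fix $\theta = 0$ and $g = t e_1$. Then $\|M_R(0,g)\| = t$, while $C(1+\|\theta\|) = C$. Letting $t \to \infty$ shows that no constant $C$ independent of $(\theta,g)$ can work.

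The same argument covers any mirror map, for example the hyperbolic entropy, where $M_R(\theta, t e_1) = t\sqrt{\theta_1^2+\gamma^2}\, e_1$ has norm at least $t\gamma$. More generally, linearity itself forces unboundedness whenever $\nabla^2 R^{-1}(\theta) \neq 0$: scaling $g$ by $t$ scales the output by $t$, while the bound $C(1+\|\theta\|)$ does not depend on $g$. This contrast with scale invariance is exactly the point of the dichotomy.

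I do not expect a real obstacle here. The only subtle step is phrasing ``in general'' correctly: the claim is that boundedness fails for typical mirror maps, such as gradient descent and hyperbolic entropy, not for every conceivable $R$. I would state it precisely as ``for every $R$ with $\nabla^2R^{-1}(\theta)\neq 0$ at some $\theta$, $M_R$ is unbounded.'' That condition always holds, since the inverse of a positive definite Hessian is nonzero, so the statement is in fact universal.
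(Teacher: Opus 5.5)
Your proof is correct. The linearity part is the paper's argument: multiplication by the fixed matrix $\nabla^2 R^{-1}(\theta)$ distributes over sums and scalars.

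For unboundedness you take a different route. Like you, the paper uses gradient descent, $R=\frac12\|\theta\|_2^2$. But it keeps a single objective $f(\theta)=\log\theta$ and lets $\theta\to0^+$, so that $M_R(\theta,g)=1/\theta$ blows up while the bound $C(1+\|\theta\|)$ stays near $C$. You instead fix $\theta=0$ and scale the gradient argument, $g=te_1$.

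What your version buys:
\begin{itemize}
\item It uses Definition~\ref{definition : bounded operator} literally, since the quantifier there is over all $g\in\mathbb{R}^n$.
\item It avoids an objective that is only defined for $\theta>0$, whereas the definition quantifies over all $\theta\in\mathbb{R}^n$.
\item Through the degree-one homogeneity in $g$, it upgrades ``unbounded in general'' to ``unbounded for every admissible $R$''. This makes the contrast with the degree-zero homogeneity (scale invariance) of the steepest operator explicit.
\end{itemize}

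The paper's version buys something different: it keeps $g$ tied to one fixed objective and shows the blow-up can come from the loss landscape at bounded $\theta$. If you prefer the oracle reading $g=\nabla f$, your example still works, because $te_1$ is the gradient of the linear objective $f(\theta)=t\theta_1$.

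One small slip: strict convexity does not by itself make $\nabla^2 R(\theta)$ positive definite (consider $\theta^4$ at $0$). Invertibility of the Hessian is simply presupposed by the definition of $M_R$, and that is all your argument needs.
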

Proof.
\textbf{Linearity:}
Using the definition consider two gradient estimates $g, h \in \mathbb{R}^n$ evaluated at $\theta \in \mathbb{R}^n$ and a coefficient $\lambda \in \mathbb{R}$ then first we have:
\begin{equation*}
    M_R(\theta, g+ h) = \nabla^2 R^{-1}(\theta) \left(g(\theta) + h(\theta)\right) = \nabla^2 R^{-1}(\theta) g(\theta) + \nabla^2 R^{-1}(\theta) h(\theta) = M_R(\theta, g) + M_R(\theta, h).
\end{equation*}
Moreover, we have
\begin{equation*}
    M_R(\theta, \lambda g) = \nabla^2 R^{-1}(\theta) \left(\lambda g(\theta)\right) = \lambda \nabla^2 R^{-1}(\theta) \left(g(\theta)\right)  = \lambda M_R(\theta, g). \qquad\square
\end{equation*}
\textbf{Unbounded:}
Consider gradient descent. We can choose a function $f(\theta) = \text{log}(\theta)$ which has gradient $f'(\theta) = 1/\theta$ for which we have $ \lim_{\theta \rightarrow 0^+} f'(\theta) = \infty$ and thus unbounded. $\square$

\paragraph{Implicit bias.}
We can now characterize the implicit bias of the composed operator.
\begin{theorem}\label{theorem : appendix implicit bias}
    Consider steepest-mirror descent with respect to the $L_p$-norm, $p \in
    \mathbb{N}_{\geq 2}$, and mirror map $\nabla R(\theta) = \log(\theta)$. Then the
    iterates of Eq.~\eqref{eq:mirror-steepest} converge in direction to a KKT
    point of:
    \begin{equation*}
        \min\, \|\theta\|_1 \qquad \text{such that} \qquad y_i \langle \theta,
        x_i \rangle \geq 1 \quad \text{for all } i \in [K].
    \end{equation*}
\end{theorem}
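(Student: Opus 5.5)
The plan is to reduce the steepest-mirror flow \eqref{eq:mirror-steepest} to a plain steepest flow on a homogeneous reparameterization, following the steepest mirror flow connection of \citet{jacobs2026never}. Then we invoke the max-margin characterization of steepest flow on homogeneous models from \citet{tsilivis2025flavors}, and finally translate the KKT conditions back to $\theta$.

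\textbf{Step 1: choosing the reparameterization.} Since $\nabla R(\theta)=\log\theta$, the domain is the positive orthant and $\nabla^2 R^{-1}(\theta)=\mathrm{diag}(\theta)$. We look for a coordinatewise map $\theta=u^{k}$, with $u>0$, under which the steepest $L_p$ flow in $u$ reproduces \eqref{eq:mirror-steepest} up to a positive time change. Write $g=\nabla_\theta\mathcal{L}$. Then $\nabla_u\mathcal{L}=k u^{k-1}\odot g$, and the $L_p$-steepest direction of a vector $h$ is $\mathrm{sign}(h)|h|^{q-1}/\|h\|_q^{q-1}$. Hence
\begin{equation*}
\dot u \propto -(k u^{k-1})^{q-1}\,\mathrm{sign}(g)\,|g|^{q-1}, \qquad \dot\theta = k u^{k-1}\dot u \propto -u^{(k-1)q}\,\mathrm{sign}(g)\,|g|^{q-1}.
\end{equation*}
We need $u^{(k-1)q}=u^{k}=\theta$, which forces $k=q/(q-1)=p$. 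So $\theta=u^{p}$ realizes the composed flow, with only a positive scalar factor separating the two normalizations ($\|\nabla_u\mathcal{L}\|_q^{q-1}$ versus $\|g\|_q$). That factor is a time reparameterization and does not affect the limiting direction. This is where $p\in\mathbb{N}_{\geq 2}$ enters: the model $u\mapsto \langle u^{p},x\rangle$ is then a polynomial, which is $p$-homogeneous and definable, as the homogeneous-network theory requires.

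\textbf{Step 2: applying the homogeneous result.} The predictor $\Phi(u;x)=\langle u^{p},x\rangle$ is $p$-homogeneous in $u$. By \citet{tsilivis2025flavors}, steepest flow with respect to $\|\cdot\|_p$ under exponential loss converges in direction to a KKT point of
\begin{equation*}
\min \|u\|_p \quad \text{s.t.} \quad y_i\langle u^{p},x_i\rangle\ge 1 \ \text{ for all } i\in[K].
\end{equation*}
This needs the usual assumption that the flow reaches a point with $\mathcal{L}<1$ (positive normalized margin), which follows from separability once the iterates align with a separator. Because $u>0$, we have $\|u\|_p^p=\sum_j u_j^p=\|\theta\|_1$, so the $u$-problem is the $\theta$-problem $\min\|\theta\|_1$ under the same margin constraints, restricted to $\theta\ge 0$. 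Directional convergence of $u$ also gives directional convergence of $\theta=u^p$.

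\textbf{Step 3: transferring KKT points (main obstacle).} I expect the main difficulty to be here and in the positivity issue. The KKT conditions in $u$ read $p\,u^{p-1}=\sum_i\lambda_i y_i\, p\, u^{p-1}\odot x_i$ with $\lambda_i\ge 0$ and complementary slackness. On coordinates with $u_j>0$ we can divide by $u_j^{p-1}$ and obtain $1=\sum_i\lambda_i y_i x_{ij}$, which is the $L_1$ KKT stationarity on the support. On coordinates where the limit direction has $\bar u_j=0$, we must show $\sum_i\lambda_i y_ix_{ij}\le 1$. I would first try to extract this from the flow itself: a coordinate that stays negligible relative to $\|u\|$ must have persistently nonpositive effective gradient push, which gives the missing subgradient inequality, as in the diagonal-network arguments of \citet{pesme2021implicit}. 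The nonnegativity of $\theta$ is inherited from the domain of $\log$. Either the statement is read over the positive orthant, or one handles signs via $\theta=u^p$ with odd $p$ or with a $u_+^p-u_-^p$ split.
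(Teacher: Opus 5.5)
Your Steps 1 and 2 are the paper's argument. Like the paper, you time-rescale Eq.~\eqref{eq:mirror-steepest}, realize it as steepest $L_p$ flow on a $p$-homogeneous reparameterization, apply Theorem 3.4 of \citet{tsilivis2025flavors}, and rewrite the objective via $\|u\|_p^p=\|\theta\|_1$. The paper uses $\theta=\prod_{i\in[p]}w_i$ with balanced initialization and the balance invariance of \citet{jacobs2026never}, where you use the single factor $\theta=u^{p}$. Your version makes balance automatic and shows the exponent is forced by $(k-1)q=k$; the product form lets every coordinate carry either sign for every $p$.

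The gap is the one you flag in Step 3, and it is real. The paper's proof does not close it either: it only asserts that the reparameterized problem is ``equivalent'' to $\min\|\theta\|_1$. The two problems share global minimizers under $\theta=u^{p}$, but KKT points do not transfer. At a coordinate with $u_j=0$, the $u_j$-derivatives of both $\|u\|_p$ and $y_i\langle u^{p},x_i\rangle$ vanish for $p\geq 2$, so stationarity in $u$ says nothing there. Take the single example $x=(1,2)$, $y=1$. The point $u=(1,0)$ is a KKT point of $\min\|u\|_p$ s.t.\ $\langle u^{p},x\rangle\geq 1$ (multiplier $1/p$). Yet $\theta=(1,0)$ is not a KKT point of the convex $L_1$ problem, whose only KKT point is its minimizer $(0,\tfrac12)$. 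So Tsilivis' theorem alone gives $L_1$ stationarity on the support plus complementary slackness. The off-support inequality has to come from the dynamics.

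Your remark about a ``persistently nonpositive push'' points the right way but is not yet an argument. After the time change, on the positive orthant, $\frac{d}{dt}\log\theta_j=\phi(z_j)$ with $z=\sum_i\ell_i(\theta_t)y_ix_i$ and $\phi(s)=\mathrm{sign}(s)|s|^{q-1}$. Directional convergence with support $S$ then forces $\int_0^t(\phi(z_j)-\phi(z_k))\,ds$ to stay bounded for $j,k\in S$ and to diverge to $-\infty$ for $j\notin S$, $k\in S$. For $p=2$, $\phi$ is the identity. Limit points of the normalized cumulative weights $\int_0^t\ell_i\,ds/\int_0^t\sum_m\ell_m\,ds$ then give multipliers satisfying the full (orthant) $L_1$ KKT system. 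For $p>2$, the nonlinearity of $\phi$ blocks this averaging unless you first show that the normalized weights $\ell_i/\sum_m\ell_m$ converge. That is where the remaining work lies.

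Two smaller corrections:
\begin{itemize}
\item Reaching $\mathcal{L}<1$ at some time is a hypothesis of \citet{tsilivis2025flavors} that you must assume. Your justification (separability once the iterates align with a separator) presupposes the conclusion.
\item Eq.~\eqref{eq:mirror-steepest} never changes the sign of a coordinate. So the best possible conclusion is a KKT point of the $L_1$ problem restricted to the closed orthant of $\theta_0$, and that orthant must contain a separator. Odd $p$ does not change this, and a $u_+^{p}-u_-^{p}$ split with both parts active is a different flow. For instance, with $x=(1,-2)$, $y=1$, $\theta_0=(1,1)$, the flow aligns with $(1,0)$, whereas the unrestricted $L_1$ solution is $(0,-\tfrac12)$. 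The positive-orthant reading is therefore forced.
\end{itemize}
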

Proof.
This can be shown by using the steepest mirror flow connection to homogeneous reparameterization as developed in \citep{jacobs2026never} and combining it with the steepest flow characterization of the max-margin in \citep{tsilivis2025flavors}.
Using a time reparameterization $dt = ||\nabla \mathcal{L}(\theta_s) ||_{q} ds$ such that $\frac{1}{p} + \frac{1}{q} = 1$ the flow can be written as:
\begin{equation}\label{equation : steepest mirror log LP}
     \frac{d \theta_t}{dt} \in -|\theta| \text{sign}\left(\partial\mathcal{L}(\theta_t)\right)  | \partial\mathcal{L}(\theta_t) |^{q-1} .
\end{equation}
We now show equivalence of this flow to a re-parameterized standard steepest descent flow.
Consider the reparameterization $\theta = \Pi_{i \in [p]} w_i$ and we train with steepest descent with respect to $L_p$-norm (with time rescaled as above). 
Moreover, we initialize such that $|w_{i,0}| = |\theta_0|^{1/p}$ i.e. balanced.
Then we have the following flow equation for each $i \in [p]$:
\begin{equation*}
     \frac{d w_{i,t}}{dt} \in - \text{sign}\left(\partial_{w_i}\mathcal{L}(\theta_t)\right)  | \partial_{w_i}\mathcal{L}(\theta_t) |^{q-1} .
\end{equation*}
Using the balance equation as in \citep{jacobs2026never} ($\forall t\geq 0$, we have almost everywhere $|w_{i,t}| =  |\theta_t|^{1/p}$ ). The evolution of $\theta_t = \Pi_{i \in [p]} w_i$ is equal to Eq.(\ref{equation : steepest mirror log LP}). It follows from Theorem 3.4 in \citep{tsilivis2025flavors} that the iterates $(w_1, \hdots, w_p)$ converge in direction to the following optimization problem:
 \begin{equation*}
          \frac{1}{2}||(w_1, \hdots, w_p )||_{L_p}^2 \qquad \text{such that } \qquad y_i \langle \theta , x_i \rangle \geq 1 \text{ for all } i \in [K].
\end{equation*}
As the map $z \rightarrow z^{2/p}$ for $p > 0$ is strictly increasing for $z \geq 0$ the optimization is equivalent to:
\begin{equation*}
         \frac{1}{2} \sum_{i \in [p]}||w_i||_{L_p}^p\qquad \text{such that } \qquad y_i \langle \theta , x_i \rangle \geq 1 \text{ for all } i \in [K].
\end{equation*}
Combining this with the balance equation (which also holds for the normalized iterates) the optimization problem above is equivalent to:
 \begin{equation*}
          \frac{p}{2} || \theta||_{L_1} \qquad \text{such that } \qquad y_i \langle \theta , x_i \rangle \geq 1 \text{ for all } i \in [K],
\end{equation*}
which concludes the result as positive multiplicative constant does not alter the solution of the objective. $\square$

\newpage
\section{Implicit bias of operator classes on sparse linear classification}\label{appendix : linear class}
As specified in the main text we consider binary classification with linear separable data. More specifically, the data can be separated by a sparse support vector. Let $\mathcal{S} = \{\mathbf{x_i}, y_i\}_{i=1}^{N}$ be a dataset of i.i.d. samples with $\mathbf{x_i} \in \mathbb{R}^D$ drawn from $\mathcal{N}(0, I_D)$ and labels $y_i \in \{\pm 1\}$ for all $i \in [N]$ generated by a sparse teacher $\theta_\star \in \mathbb{R}^{D}$ via $y_i = \operatorname{sign}\langle \theta_\star, x_i\rangle$, where $\theta_\star$ is supported only on its first two coordinates. The learner is the linear model $f(x;\theta) = \langle \theta, x\rangle$ with $\theta \in \mathbb{R}^D$,  trained under the empirical exponential margin loss $\mathcal{L}(\theta) = \frac{1}{N}\sum_{i=1}^{N} e^{-y_i \langle \theta, x_i\rangle}$. We fix $D = 100, N = 80$, placing the problem in the over-parametrized regime where infinitely many $\theta$ achieve zero training loss and the implicit bias of the optimizer determines which one is selected. 


We use this setting to illustrate the dichotomy: each optimizer in our framework is ran for a fixed budget of full-batch steps and we compare the directions $\bar{\theta}$ they select to the sparse teacher $\theta_\star$. Optimizer with an $L_1$ sparsity bias should recover $\theta_\star$. In contrast, an optimizer with an $L_\infty$ bias will spread mass across all $D$ coordinates, saturating at the $\pm ||\theta||_\infty$ margin.

\begin{figure}[h]
     \centering
     \begin{subfigure}[b]{0.49\linewidth}
         \centering
         \includegraphics[width=\linewidth]{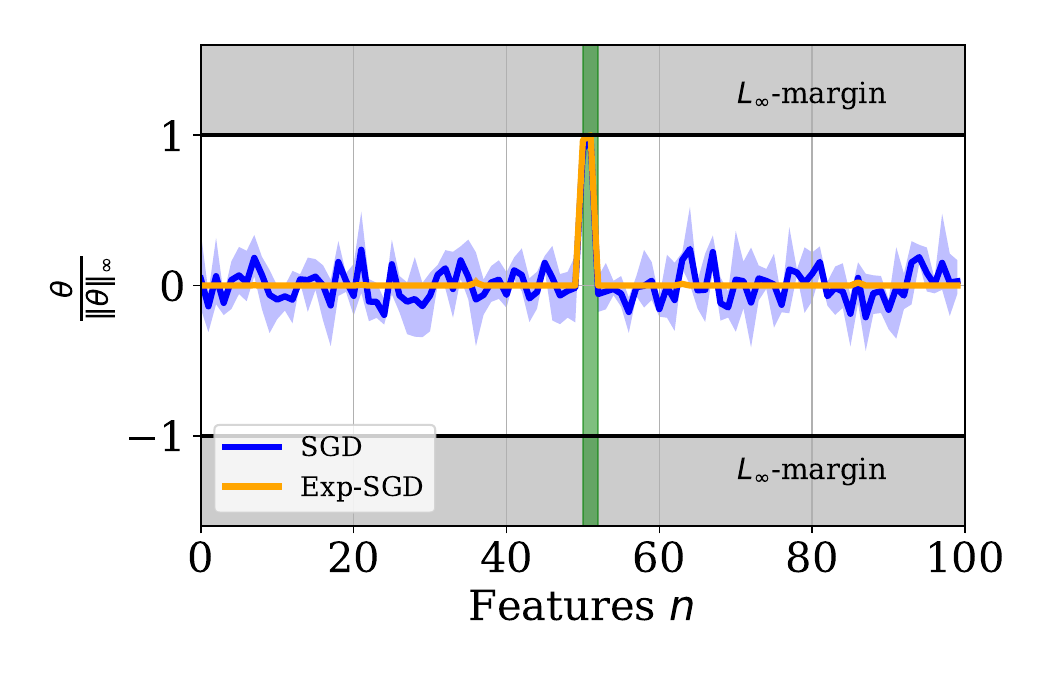}
         \caption{SGD vs Exp-SGD.}
         \label{fig:sgd_vs_exp_sgd_implicit_bias}
     \end{subfigure}
     \hfill
     \begin{subfigure}[b]{0.49\linewidth}
         \centering
         \includegraphics[width=\linewidth]{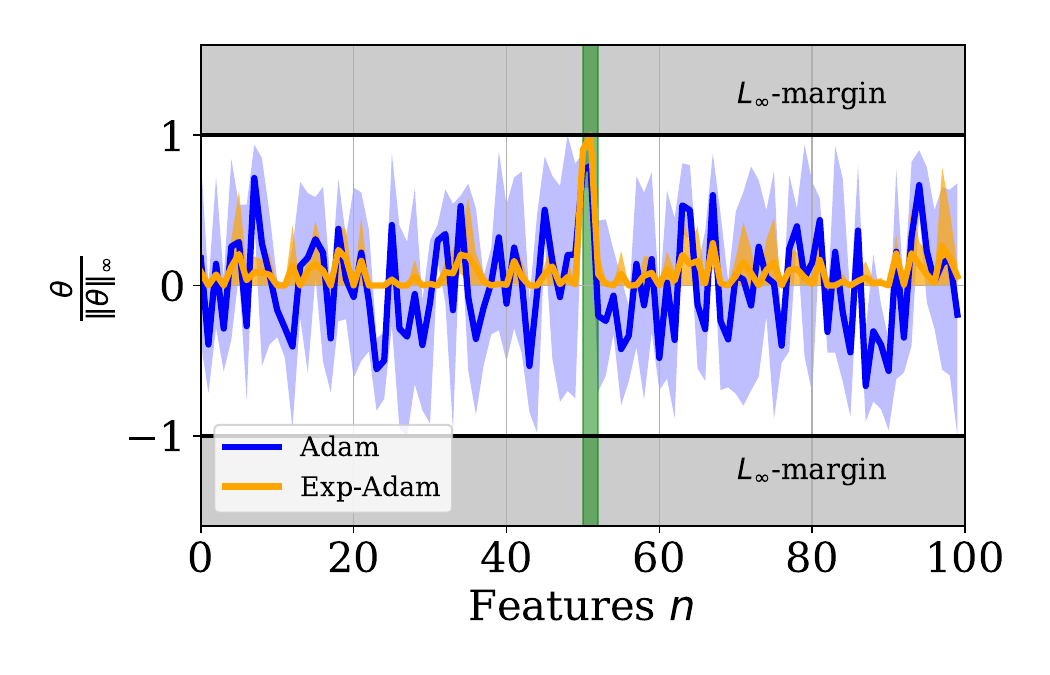}
         \caption{Adam vs Exp-Adam.}
         \label{fig:adam_vs_expAdam_implict_bias}
     \end{subfigure}
     \caption{Implicit bias of additive vs.\ multiplicative steepest descent on sparse linear classification. Each panel plots the final classifier $\theta$, normalized by $||\theta||_{\infty}$ across the $D$ features. The two informative coordinates of the sparse teacher $\theta_{\star}$ are highlighted by the green vertical strip. Each optimizer is averaged across 3 seeds and the shaded bands show the variance across them. The grey strips at $\pm 1$ mark the $L_\infty$-margin region. In (a), SGD (blue) shows slight spikes on the informative coordinates but spreads non-trivial mass across the spurious ones, consistent with its $L_2$ max-margin implicit bias; Exp-SGD (orange), the multiplicative mirror-descent counterpart drives spurious features into a tight band near zero and spikes only on the informative ones, corresponding to known $L_1$ implicit bias. In (b), the same comparison with Adam (blue) and its multiplicative composition Exp-Adam (orange): Adam saturates at $\pm 1$ across nearly all features in line with the $L_\infty$  implicit bias whereas Exp-Adam concentrates mass on the two informative coordinates and drives spurious coordinates near zero, exhibiting an $L_1$ sparsity bias.}
     \label{fig:implicit_bias_toy_example_additive_vs_multiplicative}
\end{figure}

\textbf{Optimization details and hyperparameter selection}: All optimizers are ran in the full-batch regime (batch size $=N$) for $T = 10^4$ epochs across $3$ data seeds. We use $\eta = 10^{-2}$ as the baseline learning rate for SGD and Exp-SGD, and $\eta = 10^{-1}$ for SignSGD; these settings give comparable per-step displacement of $\theta$ since SignSGD's update has unit per-coordinate magnitude regardless of the gradient norm, whereas SGD/Exp-SGD scale with $|\nabla \mathcal{L}|$. Note that on the exponential loss, $|\nabla \mathcal{L}(\theta)| \to 0$ exponentially fast as the margin grows, so any optimizer whose step is proportional to $|\nabla \mathcal{L}|$ including SGD, Exp-SGD, and $\cosh$ has a vanishing effective step size in the last phase, while SignSGD does not. Adam, Exp-Adam, and Adam-Exp use the same $\eta = 10^{-2}$ together with the standard momentum and bias-correction parameters $(\beta_1, \beta_2, \varepsilon) = (0.9, 0.999, 10^{-8})$. For $\cosh$-entropy, the choice of learning is more delicate than for other optimizers, We perform a grid sweep $\eta \in \{0.001, 0.005, 0.009, 0.01, 0.03, 0.05, 0.1, 0.2, 0.3, 0.5, 0.7, 0.9\}$ across $3$ data seeds and select the value that minimizes the final exponential loss $\mathcal{L}(\theta_T)$ while keeping $\eta < 1$ to avoid instability. The best performing setting is $\eta = 0.9$ as shown in \ref{fig:cosh_loss}.

\begin{figure}[h]
     \centering
     \begin{subfigure}[b]{0.49\linewidth}
         \centering
         \includegraphics[width=\linewidth]{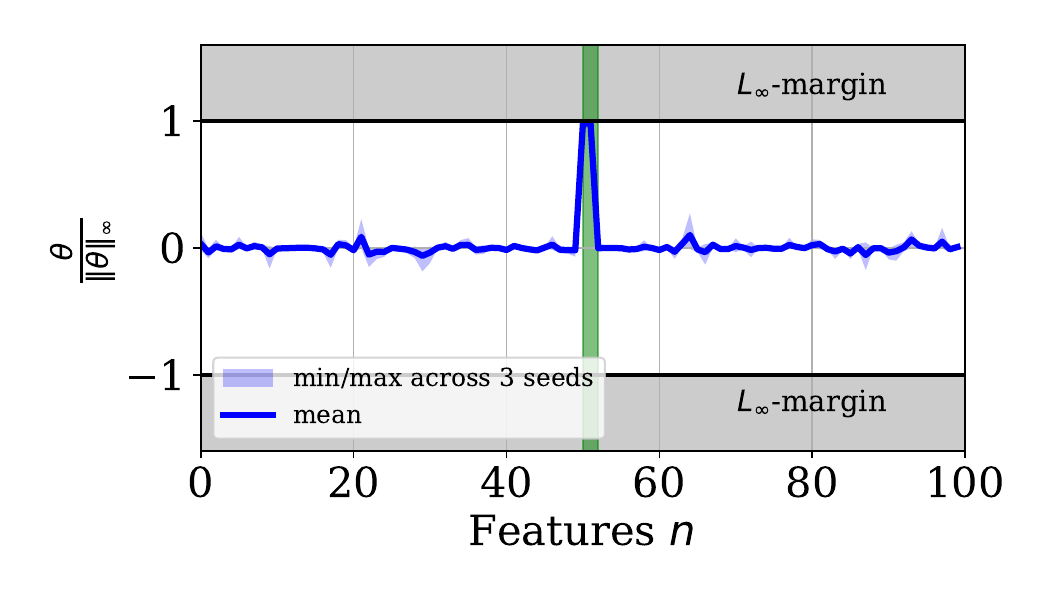}
         \caption{Hyperbolic Entropy.}
         \label{fig:hyp_entropy_MD}
     \end{subfigure}
     \hfill
     \begin{subfigure}[b]{0.49\linewidth}
         \centering
         \includegraphics[width=\linewidth]{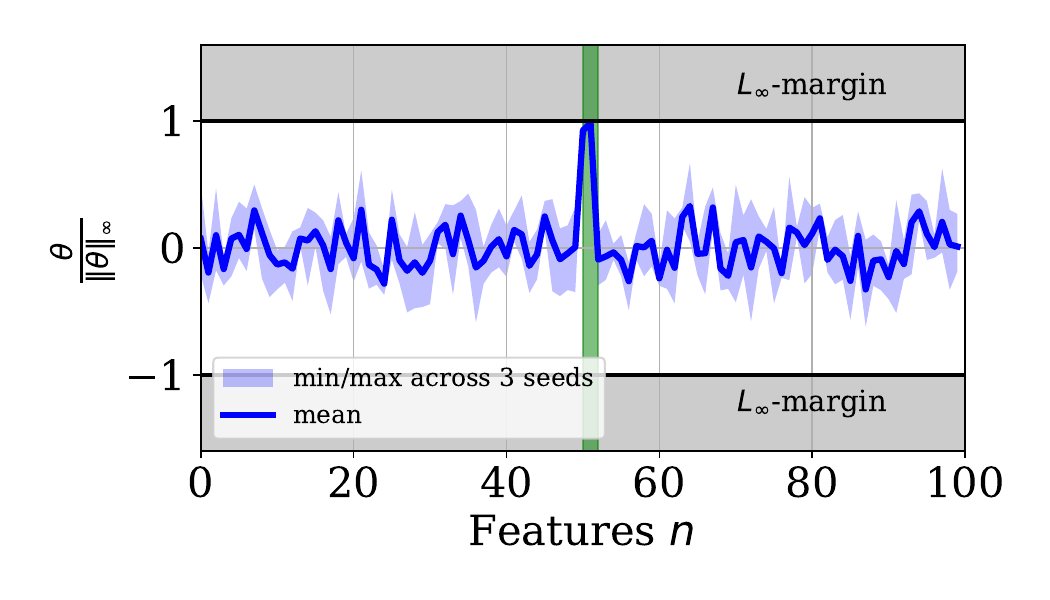}
         \caption{Cosine-Hyperbolic Entropy.}
         \label{fig:cosh_entropy_MD}
     \end{subfigure}
     \caption{Sparse linear classification with mirror maps. (a) the learned features by the hyperbolic entropy, similarly as the exponential update corresponding the entropy mirror map gives rise to an $L_1$ implicit bias. (b) In contrast, the $\cosh$-entropy spreads more mass along the non-informative coordinates.}
     \label{fig:implicit_bias_toy_example_hyperbolic_vs_cosh}
\end{figure}

\begin{figure}[h]
     \centering
     \begin{subfigure}[b]{0.49\linewidth}
         \centering
         \includegraphics[width=\linewidth]{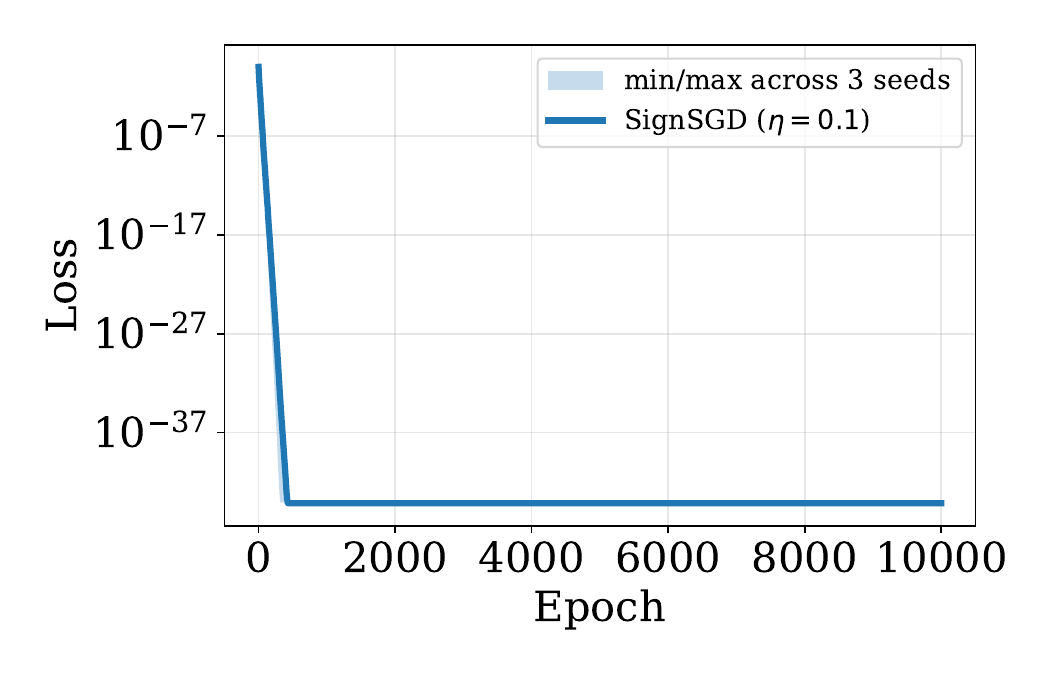}
         \caption{SignSGD.}
         \label{fig:signSGD_loss}
     \end{subfigure}
     \hfill
     \begin{subfigure}[b]{0.49\linewidth}
         \centering
         \includegraphics[width=\linewidth]{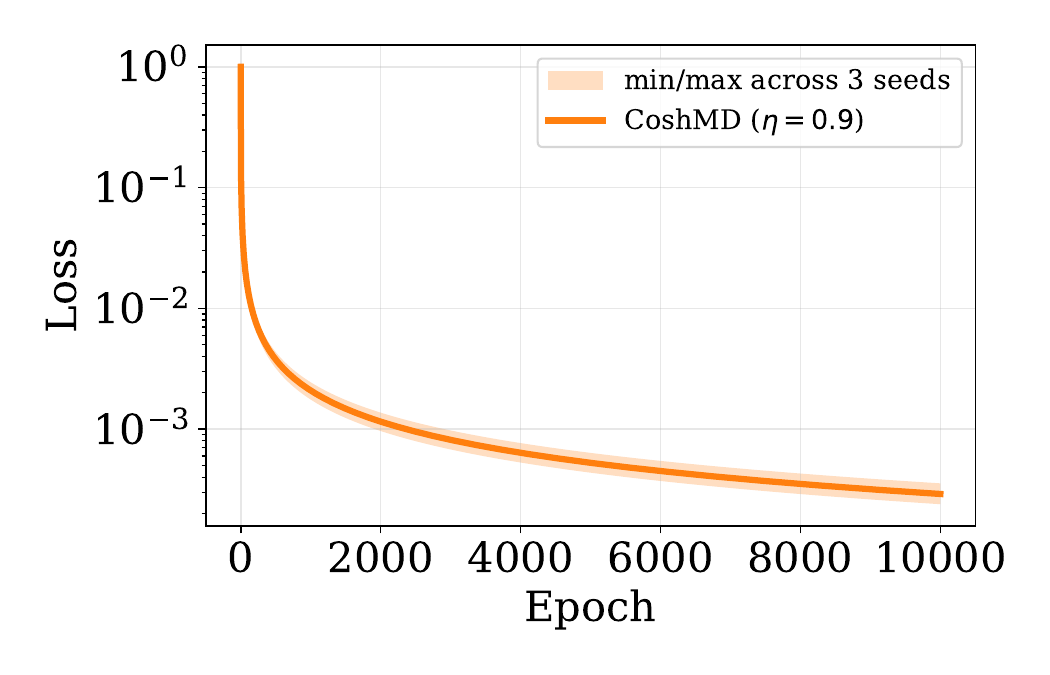}
         \caption{$\cosh$-entropy.}
         \label{fig:cosh_loss}
     \end{subfigure}
     \caption{The evolution of the loss, for signSGD and the $\cosh$-entropy. Observe that $\cosh$-entropy has much slower convergence. }
     \label{fig:toy_example_loss_vs_epochs_signsgd_and_cosh}
\end{figure}

\begin{figure}[h]
         \centering
         \includegraphics[width=0.49\linewidth]{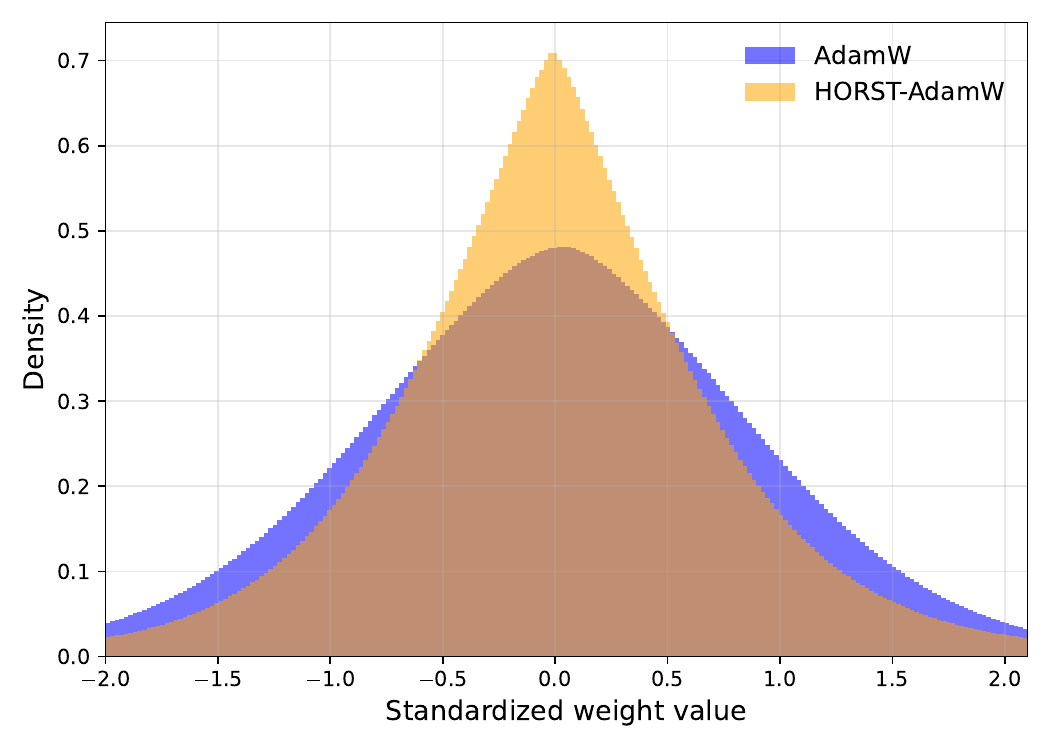}
         \label{fig:image2}
    \caption{\textbf{HORST-AdamW induces a sparser weight distribution.} Standardized weight distributions at end of training for a dense GPT-2 Small model trained on SlimPajama-6B for 25K iterations with HORST-AdamW vs. AdamW. We observe that HORST-AdamW concentrates weights sharply around zero with lighter tails, while AdamW retains a broader, near-Gaussian profile. This indicates the presence of an implicit $L_1$ bias.}
\label{fig:weight_distribution_small_scale_gpt2_on_slimpajama_dense_7layer_vs_12layer}
\end{figure}

\newpage

\section{Details on vision transformer experiment}\label{appendix : details vision}
For the DeiT training setup we use standard data augmentations such as label smoothing, mixup and cutmix. However, we do not use random augmentation. We also use the default hyperparameters of AdamW as given in Table \ref{table : vision transformer hyperparams}.
Moreover, we have ablated over the additional hyperparameters $\alpha \in \{5, 10, 200 \}$ and $\beta \in \{0 , \ 1e-3 , \ 1e-2 \}$ on DeiT-base for $70 \%$ sparsity level and found that $\alpha =5, \beta =0$ works the best. Note $\alpha =200$ was tried first as it was used in \citep{jacobs2025hamhyperbolicstepregulate} for HAM, however we found that $\alpha =200$ is highly unstable and crashes, as this is an effective multiplication of the learning rate. For all other settings we adopt these hyperparameters. Each experimental run was run on $4$ NVIDIA A100 GPUs with 40GB of memory. The code used is based on the repository by \citep{Nelaturu_TurboPrune_High-Speed_Distributed}.
\begin{table}[h]
\caption{Hyperparameters of the sparse training for vision transformers experiments.}\label{table : vision transformer hyperparams}
\centering
\begin{tabular}{lcccccc}
\hline
Model & Learning Rate & Weight Decay & Epochs & Batch Size & Optimizer & Scheduler \\
\hline
DeiT-base & $0.005$ & $1e{-1}$ & $300$ & $1024$ & AdamW & Triangular \\
DeiT-small & $0.005$ & $1e{-1}$ & $300$ & $1024$ & AdamW & Triangular \\
\hline
\end{tabular}
\end{table}

\paragraph{Comparison with HAM}
We also compare with the state-of-the-art sparsity aware optimizer HAM \citep{jacobs2025hamhyperbolicstepregulate}, which is the closest optimizer to HORST. 
We use the default parameter $\alpha =200$. Moreover, as for AdamW, $\beta$ plays the same role as the decoupled weight decay. We consider multiple values $\beta \in \{1e-3, 1e-2, 0\}$ and report the best accuracy values in Table \ref{table : horst ham comparison}. Note that this level of improvement is in line with the reported gains in \citep{jacobs2025hamhyperbolicstepregulate} in Table 8 and 9.
\begin{table}
\centering
    \caption{Comparison with the method HAM on a DeiT-small at $90\%$ sparsity trained on ImageNet for seed $1$. The geometric mismatch between gradient based exponential and the AdamW update makes it harder for HAM to improve over the baseline.}\label{table : horst ham comparison}
\begin{tabular}{c|c}
    \hline
    Method & Val. Acc. \\
    \hline
    HAM & $60.06$ \\
    HORST & $\textbf{70.5}$ \\
    Baseline & $59.23$ \\
    \hline
\end{tabular}
\end{table}

\paragraph{Long run}
To emphasize that HORST's benefits do not become negligible with longer training we provide a single seed long run ($600$ Epochs) for $70\%$ sparsity. We report the validation accuracy in Table \ref{tab:adamw_horst long}. We observe that both optimizers have now improved accuracy and furthermore the gap between them has decreased a little, however, it remains significant. 

\begin{table}[ht]
\caption{Performance comparison of AdamW and HORST-AdamW together with AC/DC on a DeiT-small for $70\%$ sparsity for longer training time of $600$ epochs.}\label{tab:adamw_horst long}
\centering
\begin{tabular}{lc}\hline Optimizer & Value \\
\hline
AdamW        & $73.76$ \\
HORST-AdamW  & $\mathbf{78.96}$ \\\hline
\end{tabular}
\end{table}

\newpage

\section{Details on GPT-2 experiment}\label{appendix : details language}

We train a GPT-2 Small model ($n_{\text{layer}} = 12$, \texttt{n\_head}$=12$, \texttt{n\_embd}$=768$, context length $512$,
$\sim 124$M parameters with tied input/output embeddings) on the SlimPajama-6B subset using the GPT-2 BPE tokenizer (vocabulary size $50304$). All runs share the hyperparameters in Table~\ref{table:gpt2_hyperparams}, use $3000$ warmup steps, run in \texttt{bfloat16} precision, and use $\beta_1=0.9$, $\beta_2=0.95$ for AdamW. For the HORST-AdamW variant we additionally set $\alpha = 5$ and $\beta = 0$. Experiments were conducted using a single NVIDIA A100 GPU with 80 GB of memory. The code used is based mainly on the repository by \cite{semenov2025benchmarking}.
\begin{table}[ht]
\caption{Hyperparameters of the dense pretraining for GPT-2 Small experiments. AdamW and HORST-AdamW use the same configuration.}
\label{table:gpt2_hyperparams}
\centering
\begin{tabular}{lccccc}
\hline
Model & Learning Rate & Weight Decay & Iterations & Batch Size & Scheduler \\
\hline
GPT-2 Small & $1\mathrm{e}{-3}$ & $1\mathrm{e}{-1}$ & $25,000$ & $128$ & Cosine \\
\hline
\end{tabular}
\end{table}

For the one-shot pruning experiments we apply per-tensor (layerwise) unstructured magnitude pruning at sparsity levels $s \in \{0.1, 0.2, 0.3, 0.4, 0.5, 0.6\}$ to the dense AdamW and HORST-AdamW checkpoints. For each prunable weight tensor $W$ and target sparsity $s$ we zero the $\lfloor s \cdot |W| \rfloor$ entries of smallest absolute magnitude and leave the rest untouched, computing the threshold independently per tensor. Pruning is applied only to the \texttt{nn.Linear} weight tensors inside transformer blocks $1, \ldots, n_{\text{layer}} - 2$ (four tensors per block: the packed $Q/K/V$ projection, the attention output projection, and the two MLP projections). The token and positional embeddings, the tied LM head, all LayerNorm parameters and biases, and the first \& last transformer blocks are kept dense throughout. No fine-tuning, weight recovery, or calibration data is used. Validation perplexity and
accuracy are evaluated on the held-out SlimPajama validation split using the same evaluation pipeline used during training.
\begin{table}[h]
\caption{Validation perplexity (lower is better) on SlimPajama-6B after one-shot layerwise unstructured magnitude pruning of dense GPT-2 Small checkpoints; no fine-tuning. HORST-AdamW achieves the lowest perplexity at every non-trivial sparsity level, with the gap to baseline AdamW significantly widening at higher sparsities.}
\label{tab:gpt2_small_one_shot_pruning_experiment_val_perplexity_table}
\centering
\begin{tabular}{lcccccc}
\hline
Optimizer & $\text{Dense}$ & $10\%$ & $20\%$ & $30\%$ & $40\%$ & $50\%$ \\
\hline
AdamW         & $23.46$ & $23.61$ & $24.54$ & $33.49$ & $86.50$  & $237.17$ \\
HAM           & $23.48$ & $23.60$ & $24.31$ & $29.58$ & $54.58$  & $105.76$ \\
HORST-AdamW   & $23.61$ & $23.70$ & $\mathbf{24.32}$ & $\mathbf{28.78}$ & $\mathbf{44.16}$ & $\mathbf{69.58}$ \\
\hline
\end{tabular}
\end{table}

\newpage

\end{document}